\newcommand{\figref}[1]{Fig.\ref{#1}}
\newcommand{\field}[1]{\mathbb{#1}}
\DeclareMathOperator*{\argmax}{arg\,max}
\DeclareMathOperator*{\argmin}{arg\,min}
\newcommand{\E}{\field{E}}
\newcommand{\inner}[1]{ \left\langle {#1} \right\rangle }
\newcommand{\one}{\boldsymbol{1}}
\newcommand{\order}{O}
\newcommand{\xxnote}[3]{}
  \renewcommand{\xxnote}[3]{\color{#2}{#1: #3}\color{black}}
\title{Fair Contextual Multi-Armed Bandits: Theory and Experiments}
\author{Yifang Chen$^{*}$}
\affiliation{University of Southern California}
\email{yifang@usc.edu}
\author{Alex Cuellar}
\affiliation{Massachusetts Institute of Technology}
\email{alexcuel@mit.edu}
\author{Haipeng Luo}
\affiliation{University of Southern California}
\email{haipengl@usc.edu}
\author{Jignesh Modi}
\affiliation{University of Southern California}
\email{jigneshm@usc.edu}
\author{Heramb Nemlekar}
\affiliation{University of Southern California}
\email{nemlekar@usc.edu}
\author{Stefanos Nikolaidis}
\affiliation{University of Southern California}
\email{nikolaid@usc.edu}
\thanks{*all authors contributed equally}
\begin{abstract}
When an AI system interacts with multiple users, it frequently needs to make allocation decisions. For instance, a virtual agent decides whom to pay attention to in a group setting, or a factory robot selects a worker to deliver a part.
Demonstrating  \emph{fairness} in decision making is essential for such systems to be broadly accepted. We introduce a Multi-Armed Bandit algorithm with fairness constraints, where fairness is defined as a minimum rate that a task or a resource is assigned to a user. The proposed algorithm uses \emph{contextual} information about the users and the task and makes no assumptions on how the losses capturing the performance of different users are generated. We provide theoretical guarantees of performance and empirical results from simulation and an online user study. The results highlight the benefit of accounting for contexts in fair decision making, especially when users perform better at some contexts and worse at others. 
\end{abstract}
\begin{document}

\maketitle
\thispagestyle{empty}
\pagestyle{empty}

%%%%%%%%%%%%%%%%%%%%%%%%%%%%%%%%%%%%%%%%%%%%%%%%%%%%%%%%%%%%%%%%%%%%%%%%%%%%%%%%

%%%%%%%%%%%%%%%%%%%%%%%%%%%%%%%%%%%%%%%%%%%%%%%%%%%%%%%%%%%%%%%%%%%%%%%%%%%%%%%%

%-------------------------------------------------------------------------------
\section{Introduction}\label{sec:intro}
%-------------------------------------------------------------------------------
%As a general comment, I believe the algorithm names for Fair CB and UCB could be misleading.  I don't believe we ever use the UCB algorithm in this paper.  I think it would be better to have names that are derived from FTRL (Follow The Regularized Leader) since that is the algorithm we use for all of our experiments and studies.  Maybe we make a distinction between FTRL and Fair FTRL for the unfair and fair versions of the algorithm.  And potentially make a distinction between FTRL and context FTRL or cFTRL.  Under this naming scheme, for the user study we would be comparing Fair FTRL with Fair cFTRL.  I'm not saying these are the names we have to go with, but I think the current names may cause confusion and we should discuss them.  

We focus on the problem of an AI system assigning tasks or distributing resources to multiple humans, one at a time, while maximizing a given performance metric. %We focus on the problem of a robot iteratively distributing resources to multiple human teammates.  
For instance, a virtual agent decides whom to pay attention to in a group setting, or a factory robot selects a worker to deliver a part. 

%assists human workers by delivering parts. The robot wants to maximize overall performance by selecting the worker that would receive each part, while accounting  for worker's  experience,  skill  and fatigue. 

If there is clearly a user who outperforms everyone else, the solution to this optimization problem would result in the agent constantly selecting that user. This approach, however, fails to account that this may be perceived as unfair by others, which in turn may affect their acceptance of the system.
%However, this approach fails to consider that other workers may perceive such a distribution of resources as unfair, and consequently affect their interaction and trust in the system.  

How can we integrate \emph{fairness} in the agent's decisions? The aim of our work is to address this question. %How should the robot make decisions to take into account such issues of trus? Our work aims to answer this question.  
%~\cite{claure2019reinforcement}
Recent works~\cite{li2019combinatorial,claure2019reinforcement,patil2019achieving} have proposed multi-armed bandit algorithms for \textit{fair} task allocation, where fairness is defined as a constraint on the minimum rate of arm selection. A user study on an online Tetris game, where the computer (player) selects users (arms) based on their score, has shown that users' trust is significantly improved when a fairness constraint is satisfied~\cite{claure2019reinforcement}.

These works, however, have assumed that the performance of each user, observed in the form of a loss vector by the agent, follows a fixed distribution that is specific to that particular user. It thus fails to account that people may have different task-related skills. For instance, when making a pin, one worker may be specialized in cutting the wire, while another worker in measuring it. It also fails to account for cases where we can not make statistical assumptions about the generation of losses, for instance in an adversarial domain.
%or performance may change over time, for instance due to habituation or fatigue.  

%This work, however, has focused on \textit{stochastic} multi-armed bandits, relying on the assumption that the performance of each human worker, observed in the form of a loss vector by the robot, follows a \textit{fixed} distribution specific to this worker. It thus fails to account for cases where workers have different skills. For instance, when making a pin, one worker may be specialized in cutting the wire, while another worker in measuring it~\cite{}.   %For instance, when making a pin, one worker may be specialized in cutting the wire, while another worker specializes in measuring it~\cite{}.   

We generalize this work by proposing a fair multi-armed bandit algorithm that accounts for different \textit{contexts} in task allocation. The algorithm also does not make any assumption on how the loss vector is generated, allowing for applications in non-stationary and even adversarial settings. 

We provide theoretical guarantees on performance, as well as empirical results from simulations and a proof-of-concept online user study, where an algorithm assigns knowledge-based questions to participants from different cultural backgrounds. The results show the benefit of the proposed algorithm when allocating tasks fairly to different users, especially when they are better in some contexts and worse in others.

\section{Problem Definition} \label{problem}

We study the online learning problem of contextual bandits (CB) with fairness constraints. 
We assume $M$ possible contexts and $K$ available actions (arms), and use the notation $[M]$ and $[K]$ to denote the set $\{1, \ldots, M\}$ and $\{1, \ldots, K\}$.
For each time step $t=1,...,T$:
\begin{enumerate}
\item The environment first decides the context $j_t \in [M]$ and the loss vector $l_t \in [0,1]^K$.
\item The learner observes the context $j_t \in [M]$ and selects the action $i_t \in [K]$.
\item The learner suffers the loss $l_t(i_t)$. 
\end{enumerate}
We assume that the contexts $j_1, \ldots, j_T$ are i.i.d. samples of a fixed distribution $q \in \Delta_M$ which is known to the learner (see Section~\ref{sec:unknown} for extension to the case when $q$ is unknown).
However, we make no assumption on how the loss vectors $l_1, \ldots, l_T$ are generated, and in general $l_t$ could depend on the entire history before round $t$, which is a key difference compared to previous work~\cite{claure2019reinforcement}.

Let $\Delta_K$ be the set of distributions over $K$ arms.
Given the history up to the beginning of round $t$ and that context $j_t$ is $j$, we let $p_t^j \in \Delta_K$ be the conditional distributions of the player's selected arm $i_t$, for $j = 1, \dots,M$. We require the following \textit{fairness} constraint parameterized by $v \in (0,1/K)$:
\begin{equation}
\sum_{j=1}^{M}q(j)p_t^j(i)\geq v, \;\forall t,i,
\label{eq:fairness}
\end{equation}
that is, the marginal probability of each arm being pulled is at least $v$ for each time.

For notational convenience, we denote a collection of $M$ distributions over arms by $P=(p^1,...,p^M)$ and the feasible set of these collections in terms of the above constraint by: 

\begin{equation}\label{eqn:feasible_set}
\Omega = \begin{Bmatrix}
\left.\begin{matrix}
P = (p^1,...,p^M) 
\end{matrix}\left|\begin{matrix}
p^1,...,p^M \in \Delta_K \ \textrm{and}\\ 
\sum_{j=1}^M q(j)p^j(i) \geq v, \forall i \in [K]\end{matrix}\right.\right.
\end{Bmatrix},
\end{equation}
which is clearly a convex set and is non-empty since the uniform distribution (for all contexts) is always in the set.

The learner's goal is to minimize her regret, defined as the difference between her total loss and the loss of the best fixed distribution satisfying the fairness constraint: 
\begin{equation*}
\text{Reg} = \underset{P_*\in \Omega}{\textup{max}}\  \mathbb{E} \left [ \sum_{t=1}^{T}\inner{p_t^{j_t}-p_*^{j_t},l_t} \right].
\end{equation*}
Achieving sublinear regret $\text{Reg}=o(T)$ thus implies that in the long run the average performance of the learner is arbitrarily close to the best fixed distribution in hindsight.

\section{Background}\label{sec:background}
%!TEX root=main.tex

%-------------------------------------------------------------------------------
% \subsection{Contextual Multi-Armed Bandits}
%We study the online learning problem of contextual bandits (CB) defined by $M$ possible contexts and $K$ arms. At each time step $t = 1, \cdots, T$, the environment decides the context $c_{t} \in [M]$ and its corresponding loss vector $l_{t}$. The agent then has to choose an action $a_{t} \in [K]$ for the observed context $c_{t}$ with the goal of minimizing its regret. The regret is defined as the difference between the total loss incurred by the agent and the total loss for the best strategy $a^{*}$ in hindsight. %for the single best arm $a^{*}$ in hindsight.  
%$$ Reg = \sum_{t=1}^{T}l_{t}(a_{t}) - \min_{a^{*}\in K}\sum_{t=1}^{T} l_{t}(a_{t}^{*})$$
%The general multi-armed bandit problem that makes assumptions 
\noindent\textbf{Adversarial Bandits.} 
In the case when $M=1$ and $v=0$ (that is, only one context and no fairness constraint),
our problem is exactly the adversarial version of the classic Multi-armed Bandits (MAB) problem, first proposed in~\citep{auer2002nonstochastic} and extensively studied since then.
It is well-known that the minimax optimal regret is of order $\order(\sqrt{TK})$. 
The most common algorithm with optimal regret is Exp3~\citep{auer2002nonstochastic}, which can be regarded as a special case of the Follow-the-Regularized-Leader (FTRL) algorithm  when we choose the regularizer to be the negative entropy (see for example \cite{abernethy2015fighting}). \\

%Specifically, at each time step, the agent selects ``the best choice thus far'' with the addition of a regularization function $R$ that is convex and differentiable:
%$$ a_{t} = \argmin_{a \in K} \sum_{i = 1}^{t-1} l_{i}(a) + \Psi_t(a)$$
%We extend the FTRL algorithm for the case of fairness constraints in a special case of contextual bandit setting, where the number of contexts is limited.

\noindent\textbf{Contextual Bandits (without fairness).} 
When there are multiple contexts but no fairness constraint, with our regret definition there is no connection between the contexts, and the optimal algorithm is to treat each context separately and to run an individual instance of a standard MAB algorithm (such as Exp3) for each context (see Section~4 of~\cite{bubeck2012regret}).

We assume finite number of contexts and are interested in the case when $M$ is small.
There is a different line of research where $M$ could potentially be infinite, in which case a different measure of regret is studied or additional assumptions are made.
For example, in~\cite{auer2002nonstochastic, langford2007epoch, agarwal2014taming, SyrgkanisLuKrSc16}, the learner is given a fixed set of mappings from contexts to actions, and regret is defined in terms of the difference between the learner's total loss and the loss of the best mapping from the given set.
Other works make assumptions on how the losses are connected with the context.
Among those,  the linear assumption is the most common one, resulting in the so-called contextual linear bandit problem (e.g.~\citep{li2010contextual, chu2011contextual, abbasi2011improved, wu2018learning}).
Another common assumption is imposing some Lipschitz conditions~\citep{kleinberg2008multi, bubeck2011lipschitz, slivkins2014contextual, cesa2017algorithmic}. \\

\noindent\textbf{Fair Bandits.} %The concept of fairness in multi-armed bandits has been defined in several ways \cite{joseph2016fair, joseph2016fairness, liu2017calibrated, li2019combinatorial}.
Joseph et al. \cite{joseph2016fair,joseph2016fairness} are among the first to study fairness for bandits and draw inspiration from the idea of fair treatment suggested by Dwork et al. \cite{dwork2012fairness} which states that ``similar individuals should be treated similarly.'' The definition of fairness there is quite different from ours, in that a worse arm should not be picked compared to a better arm, despite the uncertainty on payoffs. The authors provide  a provably fair algorithm
for the linear contextual bandit problem.  Liu et al. \cite{liu2017calibrated} build upon this work to achieve smooth fairness, which requires arms with similar distributions to be selected with similar probabilities. They further define calibrated fairness, where an arm is selected with a probability equal to the probability of its loss being the lowest. These definitions are quite different from our notion of fairness which is a constraint on the minimum rate at which each arm is selected. 

Most relevant to ours is the work by Claure et al.~\cite{claure2019reinforcement}, where fairness is defined as a minimum rate on the selection of each arm, satisfied strictily throughout the task. Similarly, Li et al.~\cite{li2019combinatorial} define fairness as the minimum rate satisfied in expectation at the end of the task. Very recent work by Patil et al.~\cite{patil2019achieving} 
further extends this definition by denoting an unfairness tolerance allowed in the system. The aformentioned works focus on a stochastic MAB setting, where the losses are independent and
identically distributed. Instead, we propose an algorithm for the contextual MAB setting and we showcase the benefit of accounting for contexts in an online user study, where the system estimates the performance of players of different backgrounds in knowledge-based questions.

%Most relevant to ours is the work by Li et al. \cite{li2019combinatorial}, Claure et al.~\cite{claure2019reinforcement} 
%on sleeping bandits with fairness constraints. The fairness requirement is a minimum ratio for selection of an individual arm that is satisfied in expectation at the end of the task.

%Previous work \cite{claure2019reinforcement} imposes a similar minimum rate on selection of each arm, but it is satisfied strictly and anytime throughout the task. This paper applies the same concept of fairness to a problem with multiple contexts each having their own loss distributions. %should we use the word "distributions"? Since the algorithm is adversarial, it may be better to way "loss function" or something along those lines.  

%-------------------------------------------------------------------------------
\section{Algorithm}\label{sec:algo}
%!TEX root = main.tex
As mentioned earlier, without the fairness constraint, there is no connection among the contexts and the optimal algorithm is just to run $M$ instances of any standard MAB algorithm separately for each possible context.
For example, classic FTRL algorithm would compute for each context $j\in[M]$:
\begin{equation}\label{eqn:FTRL}
p_t^j = \argmin_{p \in \Delta_K} \sum_{s: j_s = j}\inner{p, \hat{l}_s} + \frac{1}{\eta}\sum_{i=1}^K \psi(p(i))
\end{equation}
at the beginning of round $t$,
where $\psi : [0,1]\rightarrow \mathbb{R}$ is some regularizer, $\eta > 0$ is some learning rate, and $\hat{l}$ is the standard unbiased importance-weighted estimator with:
\[
\hat{l}_s(i) = \frac{l_s(i)}{p_s^{j_s}(i)}\one\{i_s=i\}, \;\forall i\in[K].
\]
Upon observing the actual context $j_t$ for round $t$, the algorithm then samples $i_t$ from $p_t^{j_t}$.
Standard results~\citep{bubeck2012regret} show that the $j$-th instance of FTRL suffers regret $\order(\sqrt{|\{t: j_t = j\}|K})$, and thus the total regret is $\sum_{j=1}^M \order(\sqrt{|\{t: j_t = j\}|K}) = \order(\sqrt{TMK})$ via the Cauchy-Schwarz inequality.

With the fairness constraint, however, we can no longer treat each context separately.
A natural idea is to optimize jointly over the feasible set $\Omega$ defined in Eq.~\eqref{eqn:feasible_set}, that is, to find $P_t = (p_t^1, \cdots, p_t^M)$ at round $t$ such that:
\[
P_{t} = \argmin_{P \in \Omega} \sum_{s=1}^{t-1} \inner{p^{j_s},\hat{l}_s} + \frac{1}{\eta}\sum_{j=1}^M\sum_{i=1}^K \psi(p^j(i)).
\]
It is clear that when $v = 0$ (that is, no fairness constraint), the feasible set $\Omega$ simply becomes $\Delta_K \times \cdots \times \Delta_K$ and the joint optimization above decomposes over $j$ so that the algorithm degenerates to that described in Eq.~\eqref{eqn:FTRL}.
When $v\neq 0$, the algorithm satisfies the fairness constraint automatically and can be seen as an instance of FTRL over a more complicated decision set $\Omega$.

We deploy the standard entropy regularizer $\psi(p) = p\ln p$, used in the classic Exp3 algorithm~\cite{auer2002nonstochastic} for MAB.
See Algorithm~\ref{algorithm: known context} for the complete pseudocode.
We remark that even though unlike Exp3, there is no closed form for computing $P_t$,
one can apply any standard convex optimization toolbox to find $P_t$ when implementing the algorithm.

\begin{algorithm}[t]
 \caption{Fair CB with Known Context Distribution}
\label{algorithm: known context}
\begin{algorithmic}[1]
\State\textbf{Input:} learning rate $\eta > 0$, fairness constraint parameter $v$ 
\State\textbf{Define:} $\Psi(P) =  \frac{1}{\eta}\sum_{j=1}^M\sum_{i=1}^K \psi(p^j(i))$ where $\psi(p) = p\ln p$
    \For{$t=1,\ldots,T $}
        \State Compute $P_{t} = \argmin_{P \in \Omega} \sum_{s=1}^{t-1} \inner{p^{j_s},\hat{l}_s} + \Psi(P)$
        \State Observe $j_t$ and play $i_t \sim p_t^{j_t}$
        \State Construct loss estimator $\hat{l}_t(i) = \frac{l_t(i)}{p_t^{j_t}(i)}\one\{i_t=i\}, \;\forall i\in[K]$
    \EndFor
\end{algorithmic}
\end{algorithm}

We prove the following regret guarantee of our algorithm, which is essentially the same as the aforementioned bound for $v=0$.

\begin{theorem}\label{thm:regret}
    With learning rate $\eta = \sqrt{\frac{M\ln K}{TK}}$, Algorithm~\ref{algorithm: known context} achieves
    \begin{align*}
        \text{Reg} = \order\left(\sqrt{TMK\ln K}\right).
    \end{align*}
\end{theorem}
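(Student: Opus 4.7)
The plan is to cast Algorithm~\ref{algorithm: known context} as standard Follow-the-Regularized-Leader (FTRL) on an augmented $MK$-dimensional decision set and then apply the textbook FTRL analysis. Define $L_t \in \mathbb{R}^{MK}$ to be the ``one-hot'' extension of $\hat{l}_t$ whose $j$-th block equals $\hat{l}_t$ when $j = j_t$ and is zero otherwise; then $\langle P, L_t \rangle = \langle p^{j_t}, \hat{l}_t \rangle$ for every $P = (p^1,\dots,p^M) \in \Omega$, so the update rule can be rewritten as $P_t = \argmin_{P \in \Omega} \langle P, \sum_{s<t} L_s \rangle + \Psi(P)$. Since $\hat{l}_t$ is a conditionally unbiased estimator of $l_t$ and both $p_t^{j_t}$ and $p_*^{j_t}$ are measurable with respect to the history before $i_t$ is drawn, the regret rewrites as
\begin{equation*}
\text{Reg} \;=\; \max_{P_*\in\Omega}\, \E\sum_{t=1}^{T} \langle P_t - P_*, L_t \rangle.
\end{equation*}

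Next I would apply the standard FTRL decomposition over the convex set $\Omega$, bounding $\sum_t \langle P_t - P_*, L_t \rangle$ by a penalty term $\Psi(P_*) - \Psi(P_1)$ plus a sum of per-round stability terms $\mathrm{stab}_t$. For the penalty, the per-context summand $\sum_i p^j(i)\ln p^j(i)$ attains its minimum $-\ln K$ at the uniform distribution (which lies in $\Omega$ because $v < 1/K$) and its maximum $0$ at the vertices of $\Delta_K$, so we obtain $\Psi(P_*) - \Psi(P_1) \leq M\ln K / \eta$. For the stability term, the standard local-norm analysis associated with the negative-entropy regularizer gives $\mathrm{stab}_t \leq \eta \sum_{j,i} p_t^j(i)\, L_t^j(i)^2 = \eta \sum_i p_t^{j_t}(i)\,\hat{l}_t(i)^2$; taking the conditional expectation over $i_t$ yields $\E[\mathrm{stab}_t] \leq \eta \sum_i l_t(i)^2 \leq \eta K$.

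Summing over $t$ gives $\text{Reg} \leq M\ln K/\eta + \eta T K$, and the balancing choice $\eta = \sqrt{M\ln K/(TK)}$ produces $\order(\sqrt{TMK\ln K})$, matching the theorem statement. The step that I expect to require the most care is the stability bound: for ordinary Exp3 on a single simplex it follows immediately from the closed-form exponential update, but here $P_t$ is obtained by a Bregman projection onto the strictly smaller convex set $\Omega$, for which no closed form is available. The key fact to verify is that projecting onto a convex subset does not inflate the local-norm displacement between consecutive iterates, so that the unprojected entropy stability estimate still applies. This can be argued either via the generalized Pythagorean inequality for Bregman divergences, or by directly manipulating the first-order optimality conditions of the constrained FTRL objective at $P_t$ and $P_{t+1}$; once it is in place, the remainder of the proof is routine FTRL bookkeeping.
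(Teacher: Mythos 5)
Your outline coincides with the paper's proof in its architecture: lift the estimated losses to $\mathbb{R}^{MK}$ by zero-padding, view the update as FTRL with the negative-entropy regularizer over the joint constraint set $\Omega$, bound the penalty by $M\ln K/\eta$ (the paper obtains the same bound from $-\min_{P\in\Omega}\Psi(P)$), bound the expected per-round stability by $\order(\eta K)$ via a local norm at $p_t^{j_t}$, and tune $\eta$. The penalty computation and the expectation computation in your sketch are correct (your constant differs from the paper's only by the factor $\tfrac12$ from Taylor's theorem).

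The gap is exactly the step you flag and then defer: the claim that the entropic local-norm stability estimate survives the restriction to the strictly smaller set $\Omega$. This is not a routine detail — with the importance-weighted estimator, the crude strong-convexity/H\"older argument gives a term of order $\E[\hat{l}_t^2(i_t)]=\sum_i l_t^2(i)/p_t^{j_t}(i)$, which is unbounded, so the entire theorem hinges on getting precisely the factor $p_t^{j_t}(i_t)$ (the same probability used in the importance weighting) in front of $\hat{l}_t^2(i_t)$; your assertion that ``projecting onto a convex subset does not inflate the local-norm displacement'' is the whole difficulty and is not self-evident, since the projection onto $\Omega$ could in principle shift mass on exactly the coordinate being importance-weighted. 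The two tools you name are plausible but neither is carried out. The paper closes this step with a third route you could adopt: define $\Psi^*(G)=\max_{P\in\Omega}\langle P,G\rangle-\Psi(P)$, so the constraint is built into the potential and $P_t=\nabla\Psi^*(G_{t-1})$ with $G_{t-1}=-\sum_{s<t}L_s$; write the per-round stability as $D_{\Psi^*}(G_t,G_{t-1})$, Taylor-expand it as $\tfrac12 L_t^\top\nabla^2\Psi^*(\tilde{G}_t)L_t$ at an intermediate point, apply the conjugate-Hessian inequality $\nabla^2\Psi^*(\tilde{G}_t)\preceq\nabla^{-2}\Psi(\nabla\Psi^*(\tilde{G}_t))$ (the entropy Hessian over $\Omega$ is diagonal, so its inverse has entries $\eta p$), and then use non-negativity of the losses — $\tilde{G}_t$ differs from $G_{t-1}$ only in the hit coordinate, and only downward — together with monotonicity of $\nabla\Psi^*$ to replace the intermediate point by $G_{t-1}$, yielding $D_{\Psi^*}(G_t,G_{t-1})\leq\tfrac{\eta}{2}\,p_t^{j_t}(i_t)\hat{l}_t^2(i_t)\leq\tfrac{\eta}{2p_t^{j_t}(i_t)}$ and hence expectation at most $\eta K/2$. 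With that lemma in place, the rest of your argument is exactly the paper's bookkeeping; without it, the proposal's key inequality is a placeholder rather than a proof.
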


\begin{proof}
The proof follows standard techniques (such as~\cite{abernethy2015fighting}) once we rewrite our algorithm as FTRL in the space of $\mathbb{R}^{MK}$.
First we extend the loss vector $\hat{l}_t \in \mathbb{R}^{K}$ to a vector $L_t \in\mathbb{R}^{MK}$ by padding zeros to irrelevant coordinates.
Formally, $L_t = \hat{l}_t(i_t)e_{(j_t-1)K+i_t}$ where $e_1, \ldots, e_{KM}$ are standard basis vectors in $\mathbb{R}^{MK}$.
Further let $G_t = -\sum_{s=1}^t L_s$ be the negative cumulative loss estimator up to time $t$.
%Following our notation for $P$, we let $g^j_i$ be the $((j-1)K+i)$-th coordinate of $G$ for any vector $G \in \mathbb{R}^{MK}$.
Define 
$
\Psi^*(G) = \max_{P\in \Omega} \inner{P, G} - \Psi(P),
$
which is the convex conjugate of the function $\Psi(P) + \one_\Omega(P)$ where $\one_\Omega(P)$ is 0 if $P \in \Omega$ and $\infty$ otherwise.
With these notations we then have
\begin{align*}
P_t &= \argmin_{P\in \Omega} \inner{P, \sum_{s=1}^{t-1} L_s} + \Psi(P) \\
&= \argmax_{P\in \Omega} \inner{P, G_{t-1}} - \Psi(P) = \nabla \Psi^*(G_{t-1}). 
\end{align*}
Next, note that the loss estimators are unbiased since $\E[\hat{l}_t(i)] = \E\left[p^{j_t}_t(i) \times \frac{l_t(i)}{p^{j_t}_t(i)}\right] = \E[l_t(i)]$ for all $i \in [K]$.
We can thus rewrite the regret as $\text{Reg}= \E\left[\inner{P_*, G_T} + \sum_{t=1}^T \inner{\nabla \Psi^*(G_{t-1}), L_t}\right]$ where $P_* = \argmax_{P \in \Omega}\E\left[\inner{P, G_T}\right]$.
Recalling the Bregman divergence associated with $\Psi^*$ defined as 
\[
D_{\Psi^*}(G, G') = \Psi^*(G) - \Psi^*(G') - \inner{\nabla \Psi^*(G'), G - G'}.
\]
we further rewrite the regret as 
\begin{align*}
\text{Reg}
&= \E\left[\inner{P_*, G_T} + \sum_{t=1}^T \left(\Psi^*(G_{t-1}) - \Psi^*(G_t) + D_{\Psi^*}(G_t, G_{t-1})\right) \right] \\
&= \E\left[\inner{P_*, G_T} + \Psi^*(G_{0}) - \Psi^*(G_T) + \sum_{t=1}^T D_{\Psi^*}(G_t, G_{t-1})  \right].
\end{align*}
The first three terms can be bounded as (note $G_0 = \boldsymbol{0}$)
\begin{align*}
&\E\left[\inner{P_*, G_T} - \min_{P\in\Omega} \Psi(P) - \inner{P_*, G_T} + \Psi(P^*)\right] \\
&\leq - \min_{P\in\Omega} \Psi(P) = 
\frac{1}{\eta}\max_{P\in\Omega}\sum_{j=1}^{M}\sum_{i=1}^K p^j(i)\ln\frac{1}{p^j(i)}
\leq \frac{M\ln K}{\eta}
\end{align*}
where the last step uses the fact that the entropy of a distribution over $K$ items is at most $\ln K$.
It remains to bound $\E\left[D_{\Psi^*}(G_t, G_{t-1})\right]$.
By Taylor's theorem, there exists $\tilde{G}_t$ on the segment connecting $G_{t-1}$ and $G_t$ such that $D_{\Psi^*}(G_t, G_{t-1}) = \frac{1}{2}L_t^\top\nabla^2 \Psi^*(\tilde{G}_t) L_t$.
Moreover, using properties of convex conjugates (see for example~\cite{abernethy2015fighting}) we have $\nabla^2 \Psi^*(\tilde{G}_t)  \preceq \nabla^{-2} \Psi(\nabla\Psi^*(\tilde{G}_t))$.
Realizing that for any $P\in \Omega$, $\nabla^{-2}\Psi(P)$ is a diagonal matrix with $\eta P$ on the diagonal, we further bound the Bregman divergence by
\[
D_{\Psi^*}(G_t, G_{t-1}) \leq \frac{\eta}{2} \nabla\Psi^*(\tilde{G}_t)_{(j_t-1)K+i_t}  \hat{l}_t^2(i_t).
\]
Note that $\tilde{G}_t$ is the same as $G_{t-1}$ for all coordinates except the $((j_t-1)K+i_t)$-th one, where the value could only be smaller (if not equal) by the non-negativity of losses.
By the convexity of $\Psi^*$ (and thus monotonicity of $\nabla\Psi^*$), we then have
\begin{align*}
D_{\Psi^*}(G_t, G_{t-1}) &\leq \frac{\eta}{2} \nabla\Psi^*({G}_{t-1})_{(j_t-1)K+i_t}  \hat{l}_t^2(i_t) \\
&= \frac{\eta}{2}p^{j_t}_t(i_t) \hat{l}_t^2(i_t)
= \frac{\eta l_t^2(i_t)}{2 p^{j_t}_t(i_t)} \leq \frac{\eta}{2 p^{j_t}_t(i_t)}.
\end{align*}
Taking expectation on both sides gives $\E\left[D_{\Psi^*}(G_t, G_{t-1})\right] \leq \frac{\eta K}{2}$.
Finally, combing everything above we arrive at
\[
\text{Reg} \leq \frac{M\ln K}{\eta} + \frac{\eta TK}{2},
\]
which is of order $\order(\sqrt{TMK\ln K})$ with the optimal choice of learning rate $\eta = \sqrt{\frac{M\ln K}{TK}}$, finishing the proof.
\end{proof}
%-------------------------------------------------------------------------------

\label{sec:algorithm}

\section{Experiments}\label{sec:experiments}
%!TEX root = main.tex

This section illustrates different behaviors of the Fair CB algorithm, highlighting the interplay between choice of loss distributions, fairness and context. 
 
For each experiment we define the empirical performance of the algorithm in each experiment trial as one minus the average loss. $$\text{Performance} =  1 - \frac{\sum_{t = 1}^{T} l_{t}(i_t)}{T}.$$ 
In all experiments we set the learning rate as: $\eta = \sqrt{M\ln{K}/TK}$, following the theoretical result of section~\ref{sec:algorithm}. We run the experiments for the simplest case of two arms ($i_1$ and $i_2$) and two contexts ($j_1$ and $j_2$), while our insights generalize for an arbitrary number of contexts and arms.

We are motivated by settings where a system assigns resources to human users (arms) based on whether they succeed in a task or they exhibit a desired behavior. In Sections~\ref{subsec:fairness} and~\ref{subsec:contexts} we thus focus on the case where the loss induced by an arm $i$ under context $j$ follows a Bernoulli distribution parametrized by $\mu_{i,j}\in[0,1]$, so that $l_t(i)$ is 1 with probability $\mu_{i,j}$ and 0 with probability $1-\mu_{i,j}$, when the context is $j$. 
To showcase the advantage of our adversarial algorithm, in Section~\ref{subsec:adversarial} we also consider time-varying Bernoulli distributions.
The fairness level $v$ specifies the minimum rate that an arm is selected as defined in Eq.~\eqref{eq:fairness}.

\subsection{How Fairness Affects Performance}
\label{subsec:fairness}

%We expect that the fairness will not affect performace if the fairness constraint is not enforced, for instance when there is an arm $i$ that is best in at least one context and the probability of sampled contexts in which the arm is the best choice is greater than the fairness constraint. Otherwise, the fairness constraint will force selection of suboptimal arms, resulting in a decrease in performance.

With the presence of contexts, having a fairness constraint does not always lead to worse performance.
For instance, if for each arm, the probability of seeing the contexts in which this arm is the best is larger than $v$, then the fairness constraint can be satisfied trivially by picking the best arm for each context and the performance is also the best.
However, in the case where the fairness constraint forces the algorithm to select suboptimal arms, larger value of $v$ unavoidably leads to worse performance.
Below we demonstrate this phenomenon empirically with our fair CB algorithm. \\

\begin{figure}[t]
\centering
\begin{subfigure}{0.48\linewidth}
\includegraphics[width=\linewidth]{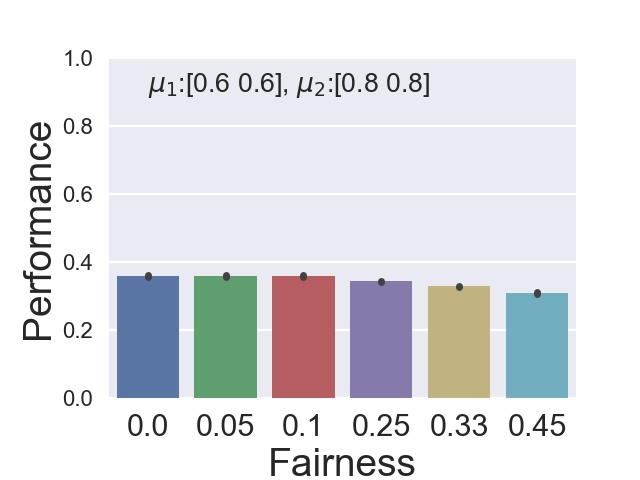}
\caption{Arm $i_{1}$ is better at both contexts} \label{fig:exp1_a}
\end{subfigure}
\begin{subfigure}{0.48\linewidth}
\includegraphics[width=\linewidth]{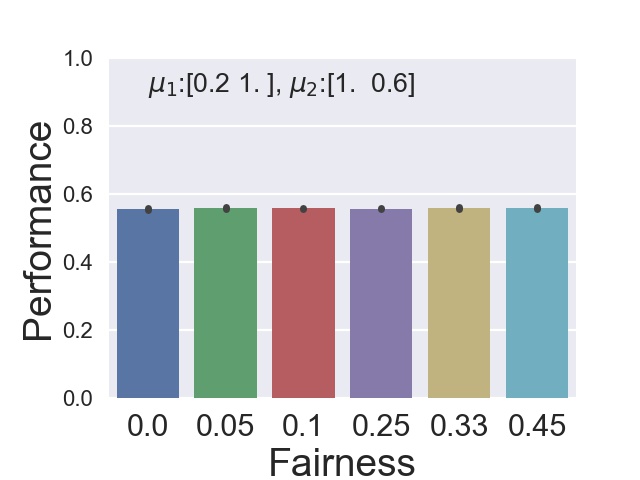}
\caption{Arm $i_{1}$ better only at context $j_{1}$} \label{fig:exp1_b}
\end{subfigure}
\vspace{-1 ex}
\caption{Performance of algorithm for different levels of fairness for $T=2000$, averaged over 100 simulations.}
\label{fig:exp1}
\vspace{-1 ex}
\end{figure}

\noindent\textbf{Even distribution of both contexts:}
We first let the contexts be distributed evenly, that is, $q(j_1) = q(j_2) = 0.5$.

%\noindent\textbf{H1.} \textit{Increasing fairness will result in worse performance when one arm is better than the other arm in both contexts.}
 If each arm is better than the other in one of the contexts, we expect that fairness does not affect the performance of an optimal algorithm, since the probability of the context occuring -- and thus that arm being selected -- is $q(j) = 0.5$ which is always greater than a fairness constraint $v \in \left(0,\frac{1}{K}\right)$. On the other hand, if one arm is better than the other in both contexts, we expect the algorithm to enforce the fairness constraint and choose the weakest arm with the minimum rate in at least one of the contexts.

\begin{figure}[t!]
\centering
\begin{subfigure}{0.47\linewidth}
\includegraphics[width=\linewidth]{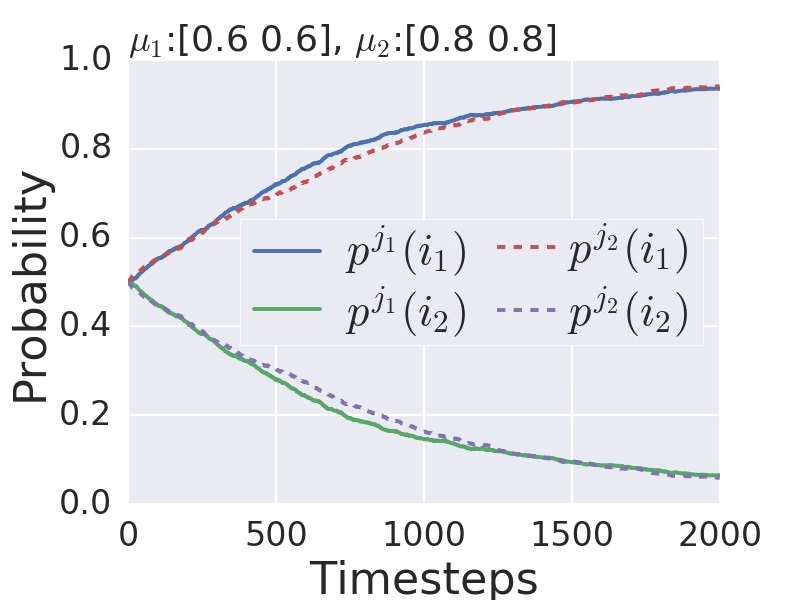}
\caption{$v = 0$} \label{fig:exp1_prob_a}
\end{subfigure}
\begin{subfigure}{0.47\linewidth}
\includegraphics[width=\linewidth]{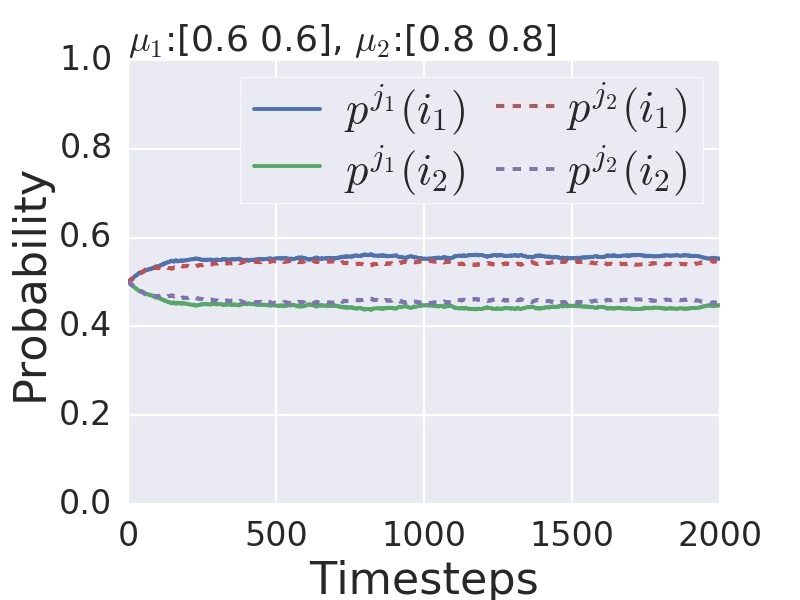}
\caption{$v = 0.45$} \label{fig:exp1_prob_b}
\end{subfigure}
\caption{Probabilities of pulling an arm over time averaged over 100 simulations when one arm is better in both contexts.}
\label{fig:exp1_arm_best_both}
\end{figure}

\begin{figure}[t!]
\begin{subfigure}{0.47\linewidth}
\includegraphics[width=\linewidth]{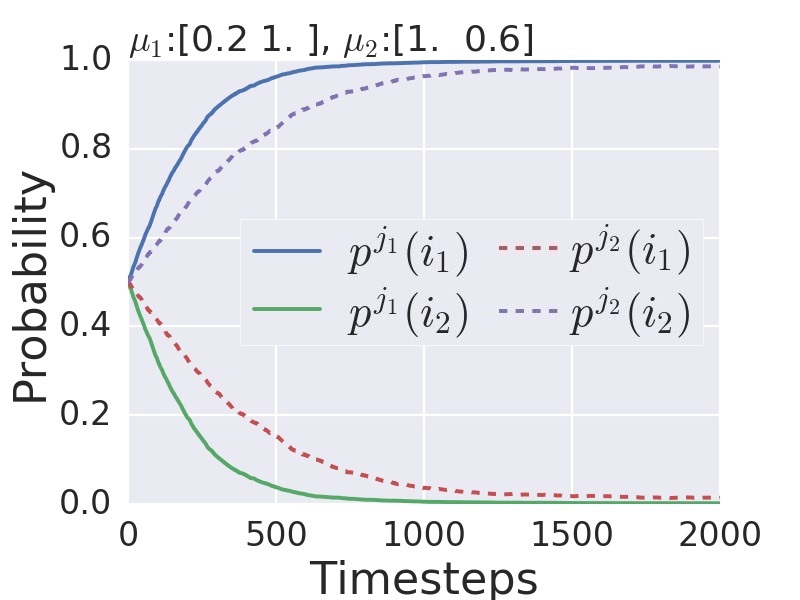}
\caption{$v = 0$} \label{fig:exp1_prob_c}
\end{subfigure}
\begin{subfigure}{0.47\linewidth}
\includegraphics[width=\linewidth]{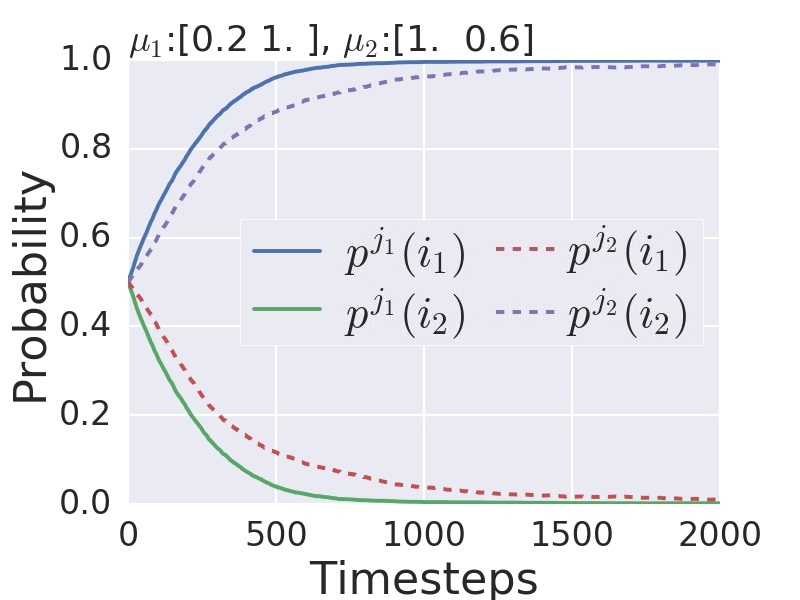}
\caption{$v = 0.45$} \label{fig:exp1_prob_d}
\end{subfigure}
\caption{Probabilities of pulling an arm over time averaged over 100 simulations when one arm is better in one context and worse in another.}
\label{fig:exp1_arm_best_one}
\vspace{-1 ex}
\end{figure}

%\paragraph*{Analysis} 
\emph{Arm 1 is better in both contexts}. For instance, we let $\mu_{1} = (\mu_{i_1,j_1}$, $\mu_{i_1,j_2}) = (0.6, 0.6)$  be the expected values of the loss distributions for contexts 1 and 2 for arm 1, and $\mu_{2} = (\mu_{i_2,j_1},\mu_{i_2,j_2}) = (0.8, 0.8)$ for arm 2. We run the algorithm in simulation for varying levels of fairness. We expect that increasing fairness results in selecting the suboptimal arm ($i_2$) with increasing frequency, which subsequently increases the total loss. 

Fig.~\ref{fig:exp1_a} shows the performance of our algorithm for six different values of $v$ over $T=2000$ rounds (and averaged over $100$ simulations).
As expected, the performance degrades as $v$ gets larger.
Note that performance was similar across the first three fairness levels. We attribute this to the inherent exploration of the FTRL algorithm from the regularization term and the relatively small difference between the expected losses $\mu_{1}$ and $\mu_{2}$ of the two players. 
A linear regression established that the fairness level significantly predicted performance, with $F(1, 598) = 1168.5, p < .0001$~\cite{kutner2005applied} and fairness accounted for $66.1\%$ of the explained variability in performance. The regression equation was: predicted performance $= 0.37 - 0.11 v$.

\figref{fig:exp1_arm_best_both} shows the assigned probabilities by the algorithm for every timestep, averaged over 100 simulations. Since $i_1$ is better than $i_2$ in both contexts, it eventually gets selected with probability close to 1 in both contexts when fairness $v=0$ and with probability 0.55 when fairness $v=0.45$.

%A one-way ANOVA performed on the performance of Fair CB in this example across all the fairness levels, indicates that the fairness has a significant effect on the performance ($p<0.01$). Thus when one player in better than the other in both contexts, increasing the fairness level leads to a decrease in the performance.
\emph{There is no arm that is better in both contexts}. In this case fairness level does not affect the performance of our algorithm, as shown in \figref{fig:exp1_b}, where $\mu_{1} = (0.2, 1.0), \mu_{2} = (1.0, 0.6)$. 

 \figref{fig:exp1_arm_best_one} shows the assigned probabilities over time. Regardless of the fairness parameter, since $i_1$ is better than $i_2$ in $j_1$ but worse in $j_2$, $i_1$ will be selected with probability close to 1 for $j_1$ and $i_2$ with probability close to 1 for $j_2$. Since $j_1$ and $j_2$ are distributed with probability 0.5, the fairness constraint is naturally satisfied. \\

\noindent\textbf{Uneven distribution of contexts:}
%\subsection{Uneven Context Distribution}
We then examine the general case where contexts are distributed with different probabilities. 
%Without loss of generality, we assume $K=M=2$ arms and contexts. 
%We make the following hypotheses: 
We expect that increasing fairness will result in worse performance when one arm is better than the other arm in both contexts, or when one arm $i_1$ is better than the other arm $i_2$ in only one context $j_1$ and $v > q(j_2)$. We let $q(j_1)=0.9,q(j_2) = 0.1$ be the distribution of the two contexts.

\begin{figure}[t!]
\centering
\includegraphics[width=0.49\linewidth]{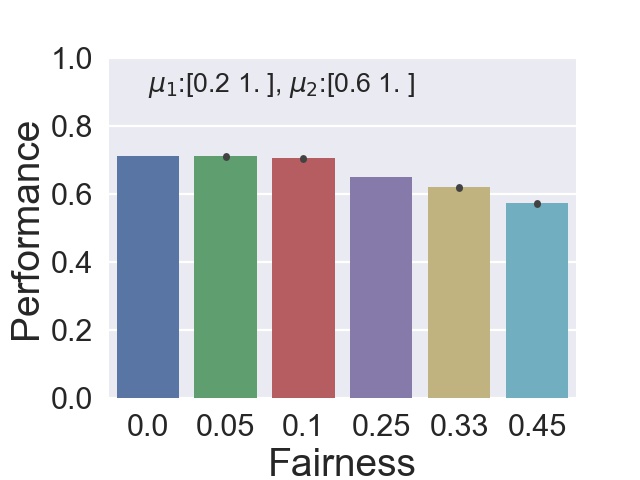}
\includegraphics[width=0.49\linewidth]{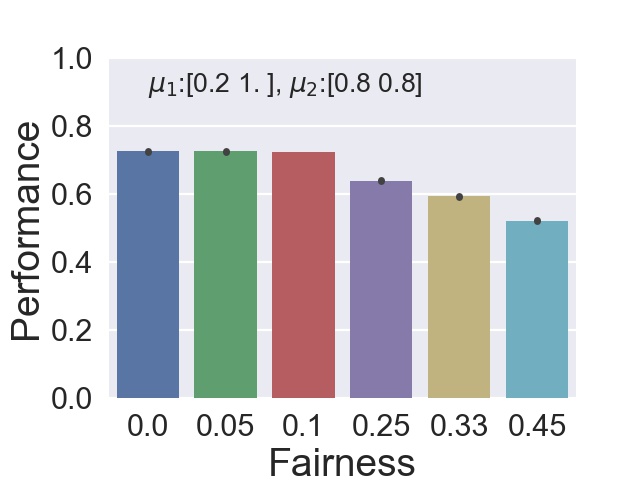}
\\
\includegraphics[width=0.49\linewidth]{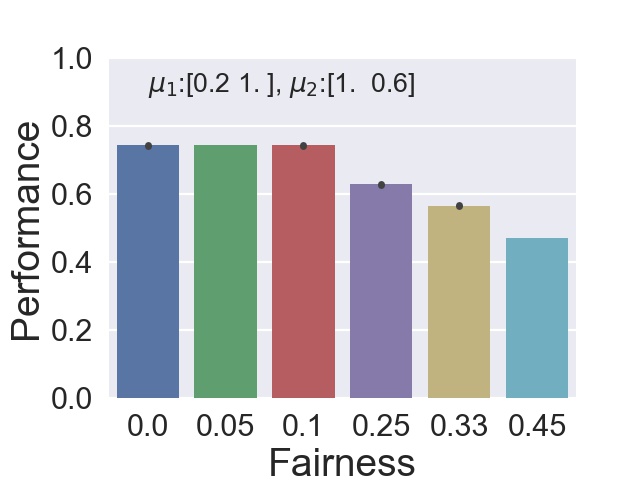}
\includegraphics[width=0.49\linewidth]{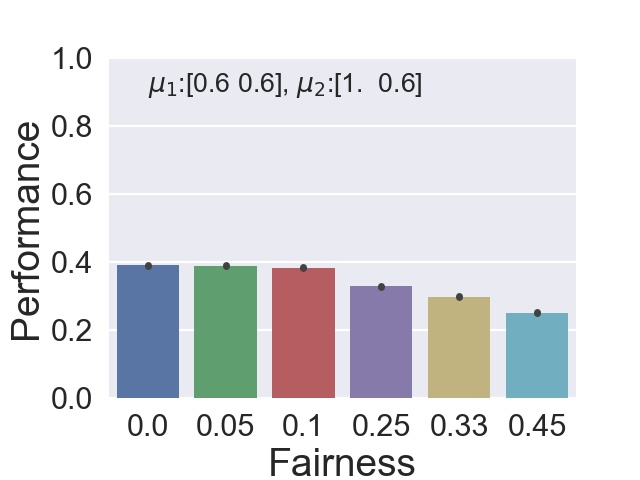}
%%\\
%\includegraphics[width=0.49\linewidth]{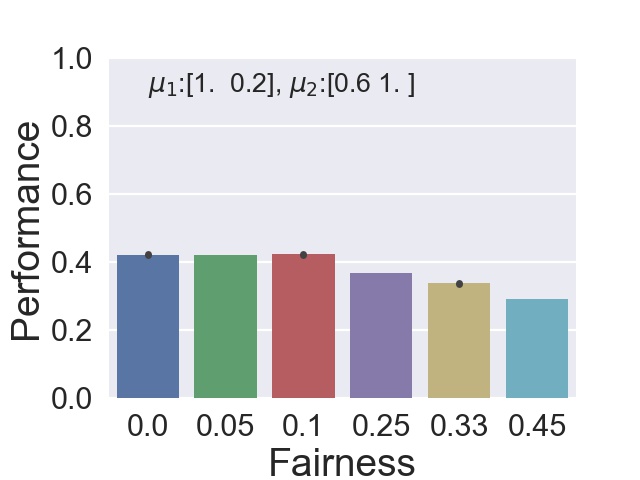}
%\includegraphics[width=0.49\linewidth]{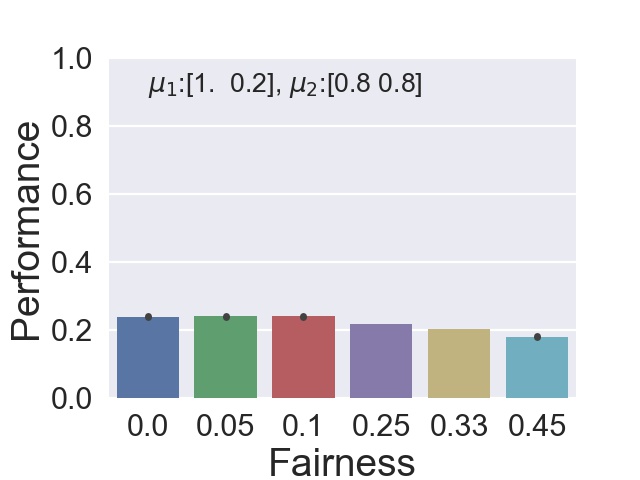}
\caption{Performance for 2-arm 2-context with $q(j_1)=0.9,q(j_2)=0.1$, $T=10000$, averaged over 100 simulations.}
\label{fig:exp4_3}
\vspace{-2 ex}
\end{figure}

 The case for one arm being better in both contexts follows the same reasoning as before. On the other hand, if one arm $i_1$ is better than the other arm in one of the contexts $j_1$ with probability $q(j_1)$, we expect increasing fairness to reduce performance for $v > q(j_2) = 0.1$.

Indeed, for different combinations of $\mu_{i_{1},j_{1}},\mu_{i_{1},j_{2}}\in\{0.2,0.6,1\},$ $\mu_{i_{2},j_{1}},\mu_{i_{2},j_{2}}\in\{0.6,0.8,1\}$, a multiple regression model statistically significantly predicted performance, with $F(2, 2397) = 7294, p < .0001,~\textrm{adj.}\ R^2 = 0.86$ and fairness being a significant predictor ($p<0.001$). Fig.~\ref{fig:exp4_3} shows the performance for different configurations. We see that indeed fairness starts decreasing the performance once $v>0.1$.

 Overall, our analysis shows that, for any number of contexts and arms, fairness matters if the fairness constraint enforces an arm to be pulled in a context that is not optimal, which occurs either when there is no context where the arm is optimal, or when the probability of the context(s) that the arm is optimal is smaller than the probability imposed by the fairness constraint.

\subsection{The Importance of Contexts}
\label{subsec:contexts}

\begin{figure}[hbt]
\centering
\begin{subfigure}{0.47\linewidth}
\includegraphics[width=\linewidth]{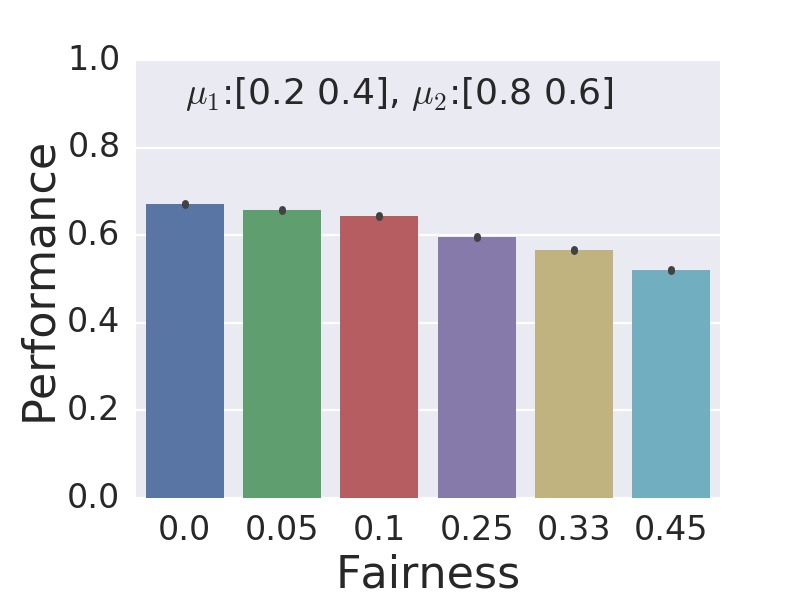}
\caption{Non-contextual FTRL} \label{fig:exp6_1_nocontext}
\end{subfigure}
\begin{subfigure}{0.47\linewidth}
\includegraphics[width=\linewidth]{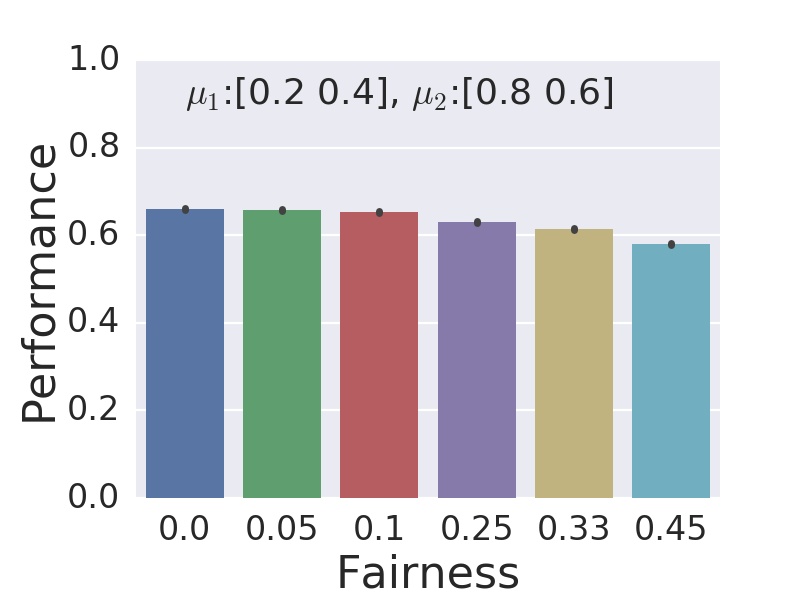}
\caption{Fair CB} \label{fig:exp6_1_context}
\end{subfigure}
\caption{Performance when $i_1$ is better in both contexts, $q(j_1) = q(j_2) = 0.5$ and $T=2000$, averaged over 100 simulations.}
\label{fig:exp6_1}
\vspace{-1 ex}
\end{figure}

\begin{figure}[hbt]
\centering
\begin{subfigure}{0.47\linewidth}
\includegraphics[width=\linewidth]{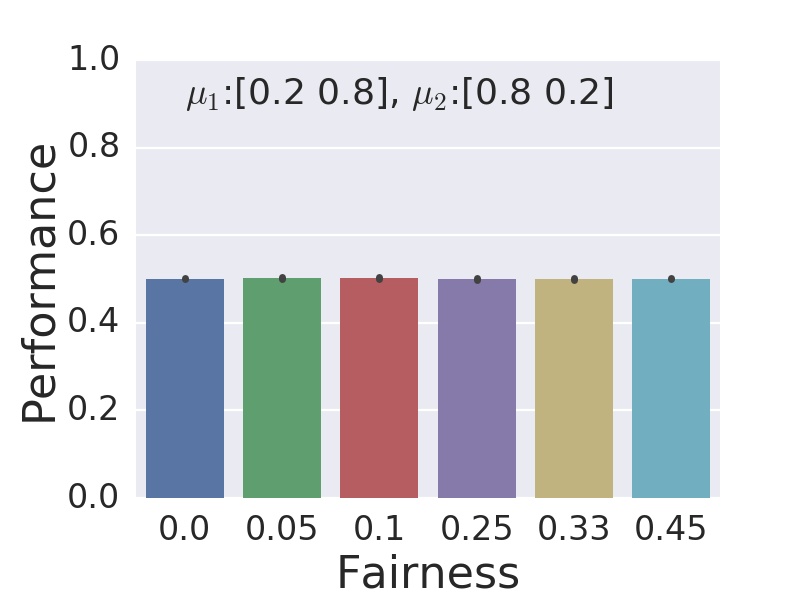}
\caption{Non-contextual FTRL} \label{fig:exp6_2_nocontext}
\end{subfigure}
\begin{subfigure}{0.47\linewidth}
\includegraphics[width=\linewidth]{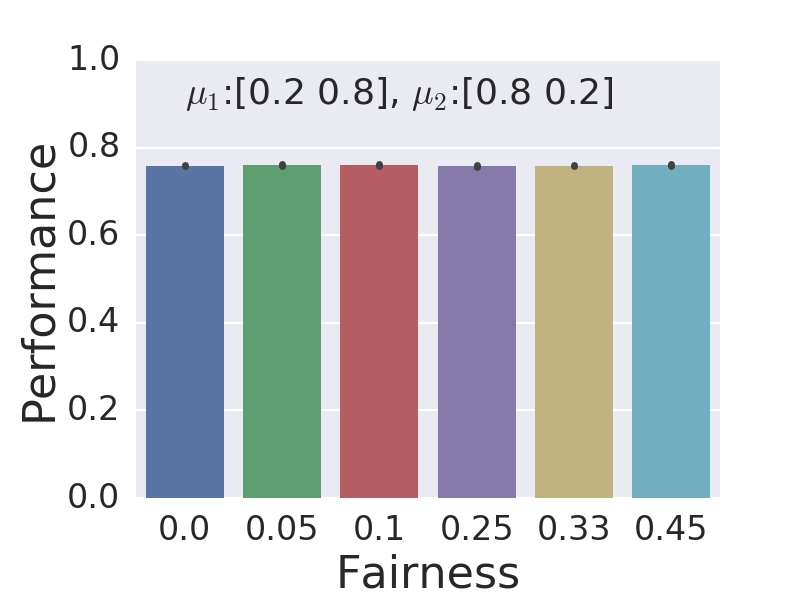}
\caption{Fair CB} \label{fig:exp6_2_context}
\end{subfigure}
\caption{Performance when $i_1$ is better in one of the contexts only, $q(j_1) = q(j_2) = 0.5$ and $T=2000$, averaged over 100 simulations.}
\label{fig:exp6_2}
\vspace{-1 ex}
\end{figure}

% \subsection{Comparison to Non-Contextual FTRL}
To illustrate the importance of contexts, we compare to an FTRL algorithm that ignores the context (equivalently, our algorithm with $M=1$). 
We consider even distribution among the two contexts.
First, we examine the case where one arm is better than the other in both contexts: $((\mu_{i_{1},j_1},\mu_{i_{1},j_2}) = (0.2,0.4),(\mu_{i_{2},j_1},\mu_{i_{2},j_2}) = (0.8,0.6)).$ Fig.~\ref{fig:exp6_1} shows the result for increasing values of fairness. While for 0 fairness there is no noticeable difference, as fairness increases, we observe that our Fair CB performs better. A one-way ANOVA for $v=0.45$ showed a significant effect of the choice of algorithm on performance ($F(1,198)=1197.43, p<0.0001$). Despite arm $i_{1}$ being better than arm $i_{2}$ in both contexts, we see a difference in performance, since the difference between the two arms' loss is much higher for the first context than the second. The contextual algorithm recognizes this disparity and selects to impose the fairness constraint in the second context rather than in both contexts. 

Fig.~\ref{fig:exp6_2} shows another result when one player is better in one context and worse in the other ($(\mu_{i_{1},j_1},\mu_{i_{1},j_2}) = (0.2,0.8),(\mu_{i_{2},j_1},\mu_{i_{2},j_2}) = (0.8,0.2)$). We observe that the contextual algorithm outperforms the baseline in all fairness levels, since it distributes the arms to different contexts while satisfying the fairness constraint. 
% while fairness does not affect performance as we expected from the previous sections.

\subsection{Adversarial Losses}
\label{subsec:adversarial}

An advantage of the Fair CB algorithm is that it makes no assumptions on how losses are generated. This contrasts previous work on fair task allocation~\cite{li2019combinatorial,claure2019reinforcement,patil2019achieving}, which assume a fixed distribution. 

 %We switch between $(\mu_{i_{1}}, \mu_{i_{2}}) = (0.1, 0.9)$ and $(\mu_{i_{1}}, \mu_{i_{2}}) = (0.9, 0.1)$ every time the learner incurs a loss of 0. 
To showcase this advantage, we compare our algorithm with Fair UCB, which assumes a stochastic setting and implements the standard UCB algorithm with a minimum pulling rate constraint (fairness) for each arm. While different implementations of Fair UCB were proposed independently by Claure et al.~\cite{claure2019reinforcement} and Patil et al.~\cite{patil2019achieving}, we use the former stochastic-rate constrained UCB implementation. Since we wish to focus on the effect of adversarial losses on performance, we used only one context ($M=1$) in both algorithms.
 
To simulate an adversarial setting, we generate the loss vector as follows: every time the learner incurs a loss of 0, the loss distribution switches between $(\mu_{i_{1}}, \mu_{i_{2}}) = (0.1, 0.9)$ and $(\mu_{i_{1}}, \mu_{i_{2}}) = (0.9, 0.1)$ (note that the index for $j$ is omitted here since $M=1$).

%For instance, we switch between the losses $(\mu_{i_{1}}, \mu_{i_{2}}) = (0.1, 0.9)$ and $(\mu_{i_{1}}, \mu_{i_{2}}) = (0.9, 0.1)$ every time the learner incurs a loss$=0$. 

We evaluate the performance of our algorithm and Fair UCB for different levels of fairness. A two-way ANOVA comparing the main effects of algorithm selection (Fair UCB and Fair CB) and fairness level ($v$) on performance shows a significant difference for both algorithms ( $F(1, 1188) = 1926.4$, $p<0.001$, Fair UCB M = $0.45$, SE = $0.0017$,  Fair CB M = $0.496$, SE = $5.46e-4$) and fairness ($F(5, 1188) = 220.41$, $p<0.001$). There was a significant interaction between the effects of algorithm selection and fairness ($F(5, 1188) = 211.46$, $p<0.001$).

Fig.~\ref{fig:adversarial} shows the performance of the two algorithms. We observe that fairness does not affect performance for Fair CB, since the switching loss vector makes the algorithm already quite conservative in the arm selection. On the contrary, Fair UCB has poor performance when fairness is small, while performance improves for increasing levels of fairness. This is because large fairness level makes the algorithm rely less on the UCB bound which is exploited by the adversary in this setting.

\begin{figure}[hbt]
\centering
\includegraphics[width=0.48\columnwidth]{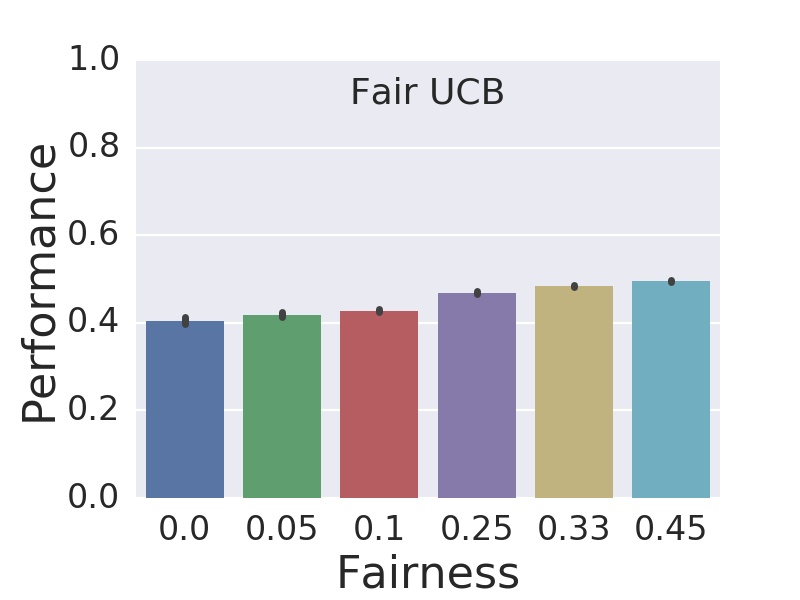}
\includegraphics[width=0.48\columnwidth]{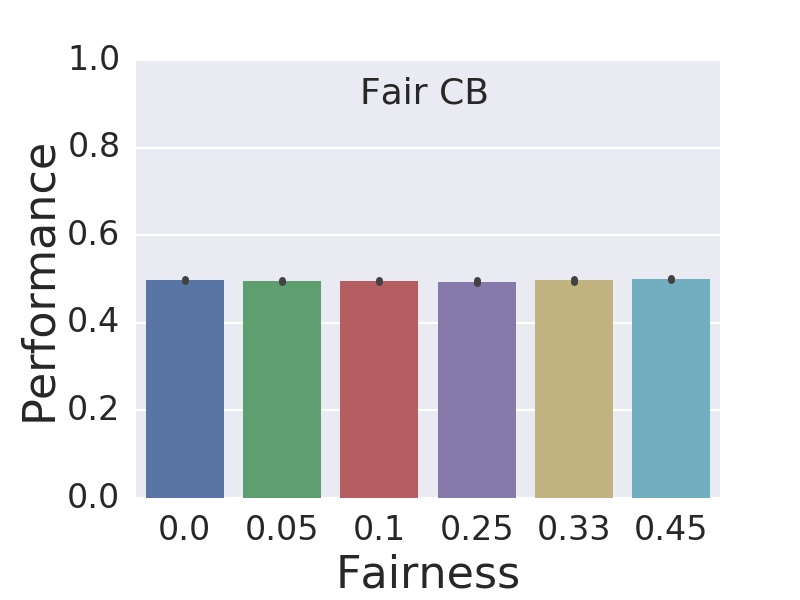}
\caption{Performance of Fair UCB and Fair CB algorithms for 2-arm 1-context problem with adversarial losses, with $T=1500$ averaged over 100 simulations.}
\label{fig:adversarial}
\vspace{-1 ex}
\end{figure}

% %-------------------------------------------------------------------------------
% \subsection{Fairness Affects Performance}
% \input{experiments-fairness.tex}
% %
% %-------------------------------------------------------------------------------
% \section{The importance of Contexts}
% \input{experiments-contexts.tex}

% \section{Adversarial Losses}
% \input{non-stationary.tex}

\section{User Study}
%!TEX root = main.tex

%-------------------------------------------------------------------------------
We wish to assess whether accounting for contexts when distributing resources fairly results in a better performance. Results from section~\ref{subsec:contexts} show that Fair CB is particularly beneficial when the arms are better in one context and worse in another. Therefore, we design a proof-of-concept online user study, where we expect participants to perform better in different contexts. 

In our study, the system has to assign knowledge-based questions from different topics to two users, one at a time, so that the number of correct answers is maximized. We compare the fairness and performance of Fair CB, with the non-contextual FTRL algorithm (i.e. Baseline) that does not consider context while assigning questions. %This study helps us to examine how the algorithm assigns questions of a particular topic to real users that are knowledgeable in that topic while maintaining fairness. 

\subsection{Experimental Setup}

\textit{Methodology:} We created an online quiz where users have to identify states and famous people from either USA or India, which are the 2 contexts. We paired two users to simultaneously take the quiz by matching users indicating India as their country of origin with users indicating the United States. We did this with the expectation that users from India would be better in questions related to their country than users from USA and vice versa.

We had two quizzes, each assigned to one of the algorithms (non-contextual FTRL or Fair CB). Each quiz had a fixed set of 44 questions evenly distributed between the two topics (20 questions about India, 20 about USA in alternating order). The first four questions of each quiz were equally divided among the two players for initialization. For each question, users had 10 seconds to select one out of four candidate answers.

We adopted a within-subjects design, where the same pair of users took both quizzes, one running the Fair CB algorithm and other running the Baseline algorithm.
We counter-balanced the assignment of quizzes to algorithms. While we did not expect any learning effects, since the quizzes included knowledge-based questions, we had a training section where subjects answered example questions and we also counter-balanced the order of the two algorithms.
%The order of the algorithms and quizzes was alternated to nullify any ordering effects.

% \begin{figure}[hbt]
% \centering
% \includegraphics[width=0.8\columnwidth]{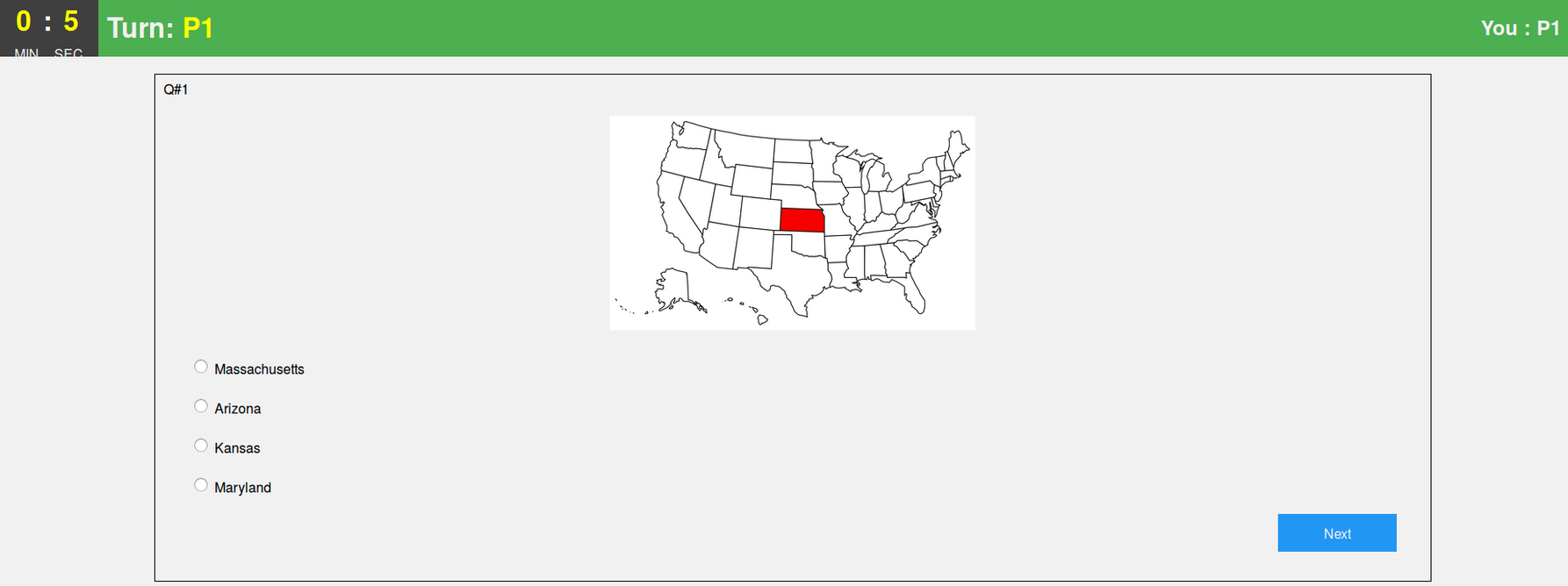}\\
% \includegraphics[width=0.8\columnwidth]{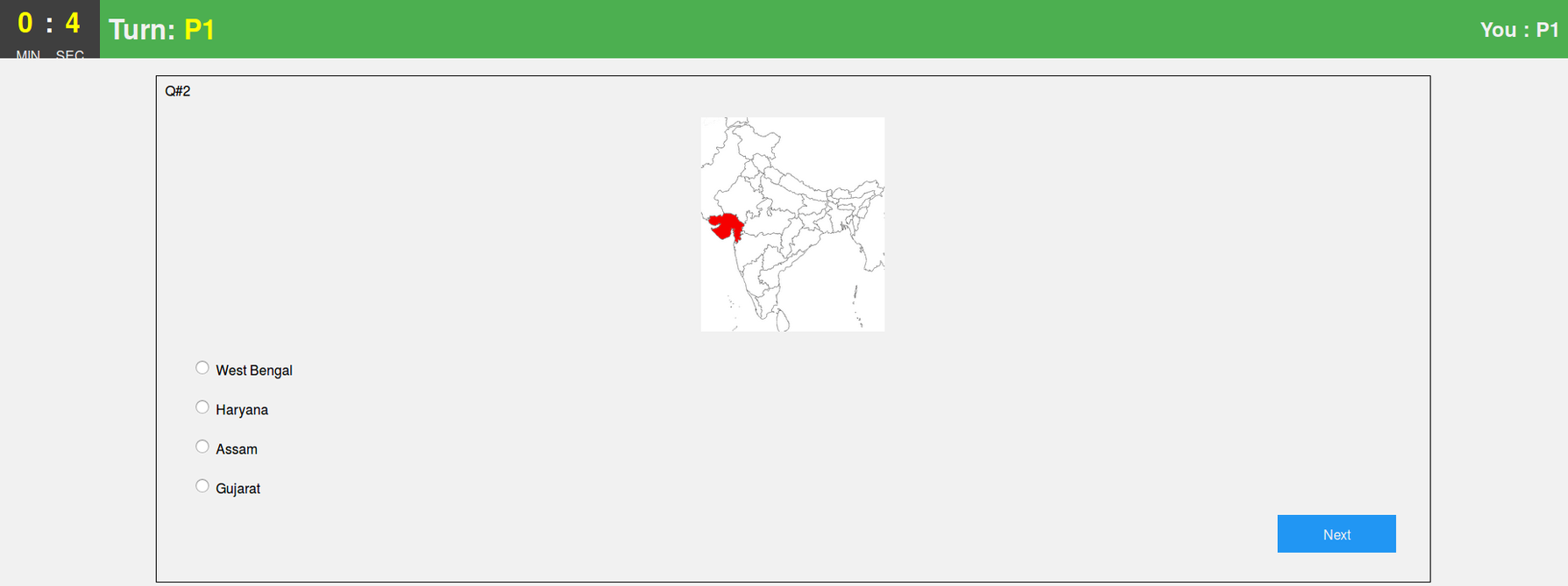}
% \caption{Sample questions from the quiz based on knowledge of states from USA (top) and India (bottom).}
% \label{fig:quiz}
% \vspace{-1 ex}
% \end{figure}

\textit{Algorithm:} In this experiment we had two contexts $M = \{1,2\}$ and two human participants $K = \{1,2\}$. We set the fairness parameter $v$ to $0.33$. We tuned the learning rate for both algorithms to $\eta=0.25$.

To reduce variance from sampling, we implemented the Fair CB algorithm with deterministic schedules by setting a ``window'' of 10 questions, 5 for each context in alternating order, and we assigned participants to questions deterministically, based on the output of each algorithm. For instance, if $p^{j_1}(i_1) = 0.6$ for context 1 and $p^{j_2}(i_1) = 1.0$ for context 2, we assigned 3 of the 5 questions of context 1 to participant 1, all 5 questions of context 2 to participant 1, and the remaining questions to participant 2.

 At the end of that window the system received the loss values for each question corresponding to the context and participant, and updated the participant probabilities. Since we had a total of 44 questions, the algorithm performed 4 updates. 
 %After each update, the system assigned the turns for the participants for the next 10 questions. 
%The quiz is run over a time horizon of $T = \{1,2,3,4\}$. At each time step $t \in T$, the system assigns the the next 10 questions between the players based on the current player probabilities. 

\textit{Hypotheses:} We make the following hypothesis:

\noindent\textbf{H1.} \textit{Fair CB algorithm will perform better than the Baseline algorithm}. Since we expect users to be more knowledgeable in one of the contexts and less knowledgeable in the other context, we expected that Fair CB would result in better performance, compared to an algorithm that assesses users based on their  performance in both contexts together. We base this on the results from the simulations in section~\ref{subsec:contexts}.
%As the users can be knowledgeable in different topics, choosing users based on the topic should result in a better performance than simply choosing the better user irrespective of their background. 

\noindent\textbf{H2.} \textit{Participants' subjective responses will not be worse in the Fair  CB algorithm, compared to the baseline}. Since both algorithms account for fairness, we expected users' responses for the Fair CB to be at least as good as in the baseline case. 

We note that we did not compare against different fairness levels, since simulations  in section \ref{subsec:fairness} show that fairness matters only when one arm is better at both contexts, which we expect to happen infrequently in this study. We refer the reader to previous studies~\cite{claure2019reinforcement} which highlight the effects of fairness on users' perceived fairness and trust in the system.
%\noindent\textbf{H6.} \textit{Perceived fairness of the Fair CB algorithm will not be worse than that of the Baseline algorithm.} Previous work~\cite{claure2019reinforcement} has shown that accounting for fairness has a positive effect on users' trust in the system.

%The constraint on the minimum rate at which a player is chosen should result in a perceived fairness that is not inferior to that of the Baseline.

\begin{table*}[hbt]
\centering
\begin{tabular}{@{}lll@{}}
\toprule
Factor & Question No. & Question \\ \midrule
\multirow{2}{*}{Fairness} & Q1. & How FAIR or UNFAIR was it for YOU that the computer gave you the designated number of questions? \\
 & Q2. & How FAIR or UNFAIR was it for your PARTNER that the computer gave them the designated number of questions? \\
Trust & Q3. & How much do you trust the computer to make a good decision about the distribution of questions? \\ \bottomrule
\end{tabular}
\caption{Survey questions answered for both algorithms after the quiz.}
\label{tab:survey}
\end{table*}

\begin{table*}[t]
\resizebox{\textwidth}{!}{%
\begin{tabular}{ll}
\hline
Example Quote \\ \hline
%Noticed no difference & ``No.", ``I didn't really notice a difference." \\
%\multirow{4}{*}{Noticed a difference} \\ 
 ``Maximum from US based question to me while other India based" \\
  ``I thought I got more if I was right"\\ 
  ``The other person got way more questions than me" \\
  ``I feel my partner had more in section B" (Sec. B - Baseline) \\
  ``There seemed to be fewer questions in a row for each of us in Set B." (Set B - Fair CB) \\
  ``Part 1 seemed to do a much better job of giving questions about the US to me, and  questions about India to my partner." (Part 1 - Fair CB) \\
 ``The first was more even, in the 2nd the other player got a lot more questions" (1st - Fair CB) \\
 % \begin{tabular}[c]{@{}l@{}}``Yes towards the end of the second set the questions seemed to switch to the other player if you got one right instead of switching after \\ getting one wrong." (2nd set - Fair CB)\end{tabular} \\ 
  \hline
\end{tabular}%
}
\caption{Participants response to the question ``Did you notice a difference in how each part distributed questions?''}
\label{tab:comments}
\end{table*}

\textit{Measures:} We recorded the participants' performance, the number of the questions assigned, the loss values corresponding to the participant responses, and the probabilities estimated at each time step. We additionally asked participants questions related to their perceived fairness and trust in the system, using survey questions (Table~\ref{tab:survey}), where each response was measured on a seven-point Likert scale.

\textit{Procedures:} We recruited participants using Amazon Mechanical Turk (AMT) and used Qualtrics to create and record the survey responses. The AMT participants were instructed that they would be paired with another person to take the quiz together and the computer would decide who gets to answer a particular question. After the quiz the participants were redirected to the survey, where they answered questions about their experience. The study was approved by the Institutional Review Board of our University.
%The first four questions of the quiz would be equally divided, and therefore we initialize a loss value for each player in both contexts.  After the first four questions, the algorithm would choose player distribution of questions for the next 10 turns.  

\textit{Participants:} We recruited 80 participants (40 pairs) from AMT. We removed data from 3 pairs because they did not complete the quiz. The final dataset had $N=74$ participants (37 - US, 37 - India). 

\subsection{Results}

\subsubsection{Performance}
\begin{figure}[hbt]
\centering
\includegraphics[width=0.48\columnwidth]{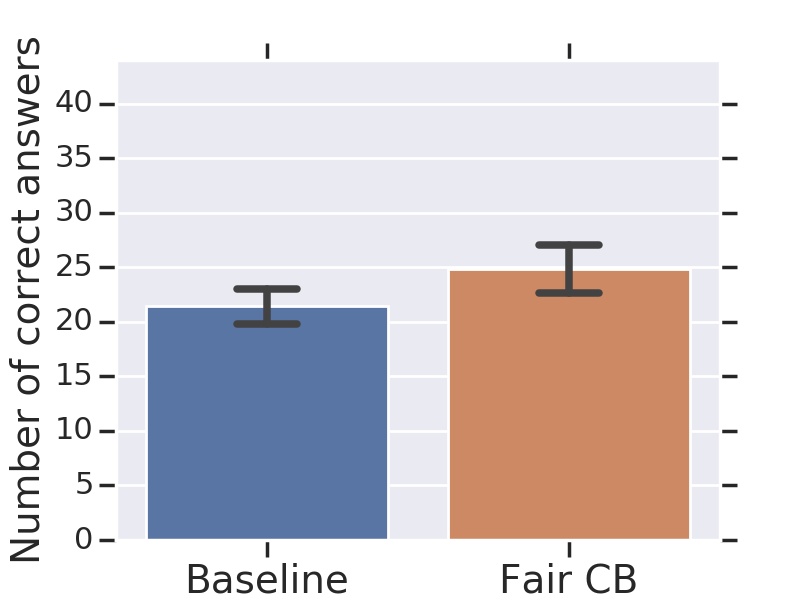}
\includegraphics[width=0.48\columnwidth]{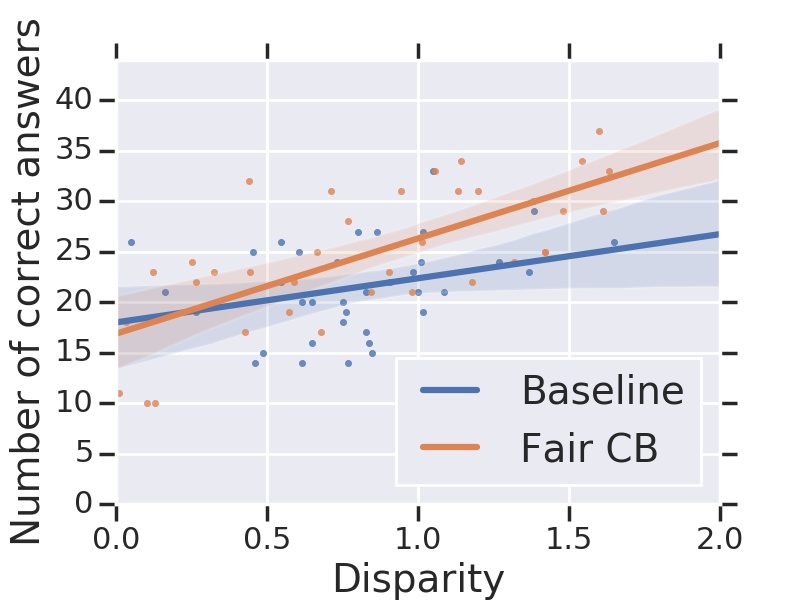}
\caption{Performance of Fair CB algorithm compared to Baseline.}
\label{fig:disparity}
\vspace{-1 ex}
\end{figure}

We measure the performance of each algorithm by the total number of questions answered correctly for each quiz. A paired t-test showed a statistical difference ($t(36) = -3.308, p = 0.002$) in the performance of the users for Baseline (M = 21.472, SE = 0.777) and Fair CB (M = 24.833, SE = 1.137) conditions. On average, the users answered $48.8\%$ questions correctly in the Baseline and $56.44\%$ questions correctly in the Fair CB conditions. We found no significant effect of the set of questions on performance. This result supports hypothesis H1. 
%We also compared the performance of users in the first (Set A) and second (Set B) set of questions, irrespective of the algorithm, to evaluate if there was any difference in the difficulty of the question sets. A paired t-test showed no statistical difference ($t(36)=-0.843, p=0.404$) is the users' performance in Set A (M = 22.66, SE = 1.06) and Set B (M = 23.63, SE = 0.956). 

A post-hoc analysis of the data shows that the difference in performance was larger  when one participant was much better than the other in one of the contexts. 
%The Fair CB algorithm assigns more questions in a context to the player performing better in that context. 
We show this by defining the \textit{disparity} between the participants as the average difference in participant performances for each context. Higher disparity means that one participant was much better in one of the contexts and worse in the other context: %\snnote{needs proper linear regression}
$$\delta= \left| (\hat{\mu}_{i_1,j_1}-\hat{\mu}_{i_2,j_1}) - (\hat{\mu}_{i_1,j_2}-\hat{\mu}_{i_2,j_2})\right|$$
where $\hat{\mu}_{i_1,j_1}$ (and similarly for others) is the measured performance per question of participant $i_1$ in context $j_1$ at the end of the experiment. 

A linear regression on the performance of the Baseline established that disparity ($\delta$) did not show a significant effect, $F(1, 34) = 3.65, p = 0.0645$ and accordingly disparity accounted for only $7.04\%$ of the explained variability. Whereas, a linear regression on the performance of Fair CB established that disparity significantly predicted its performance, $F(1, 34) = 32.9, p < 0.0001$ and the model explained $47.7\%$ of the variability in performance. The regression equation was: predicted performance $= 16.871 + 9.448 \delta$. Fig.~\ref{fig:disparity} shows the positive effect of disparity on performance in the Fair CB algorithm.
% , while it did not affect performance in the baseline. 

\subsubsection{Subjective Responses}

% \begin{figure*}[hbt!]
% \centering
% \includegraphics[width=0.3\textwidth]{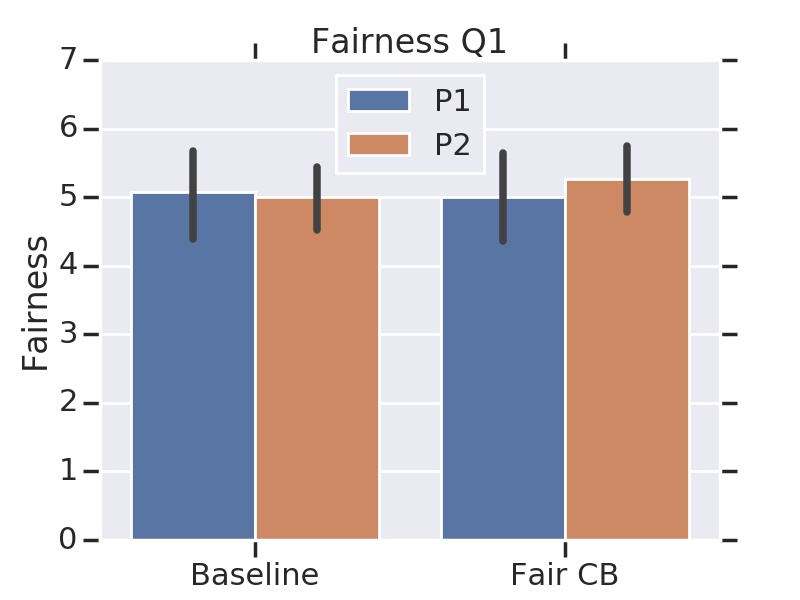}\label{fig:fairness1}
% \quad
% \includegraphics[width=0.3\textwidth]{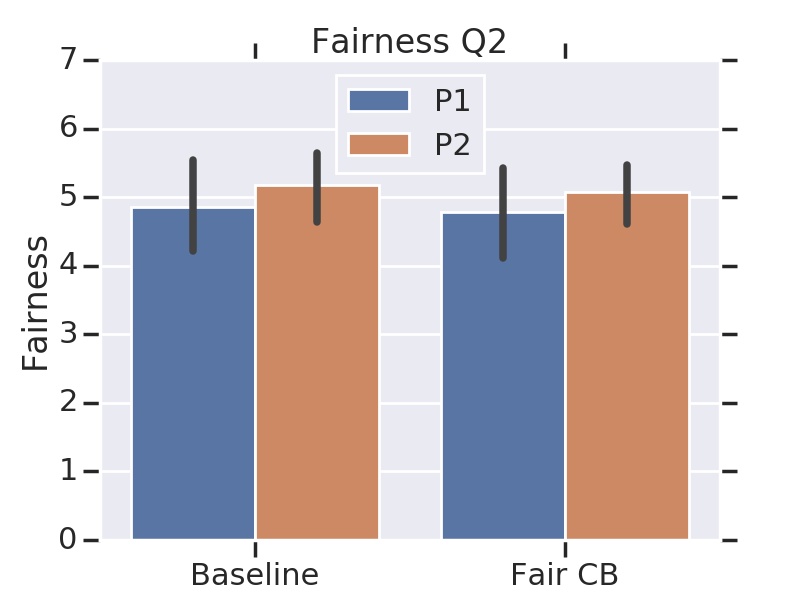}
% \label{fig:fairness2}
% \quad
% \includegraphics[width=0.3\textwidth]{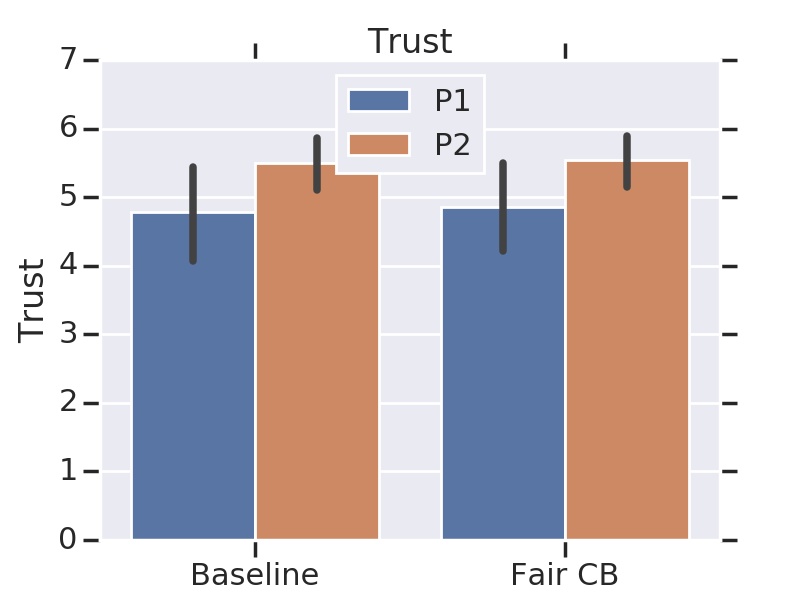}
% \label{fig:trust}

% \caption{Responses to the subjective questions by each player (P1 are from USA, P2 from India) for the Baseline and Fair CB algorithms.}
% \label{fig:fairness}
% \vspace{-1 ex}
% \end{figure*}

\begin{figure}[t!]
\centering
\includegraphics[width=0.6\linewidth]{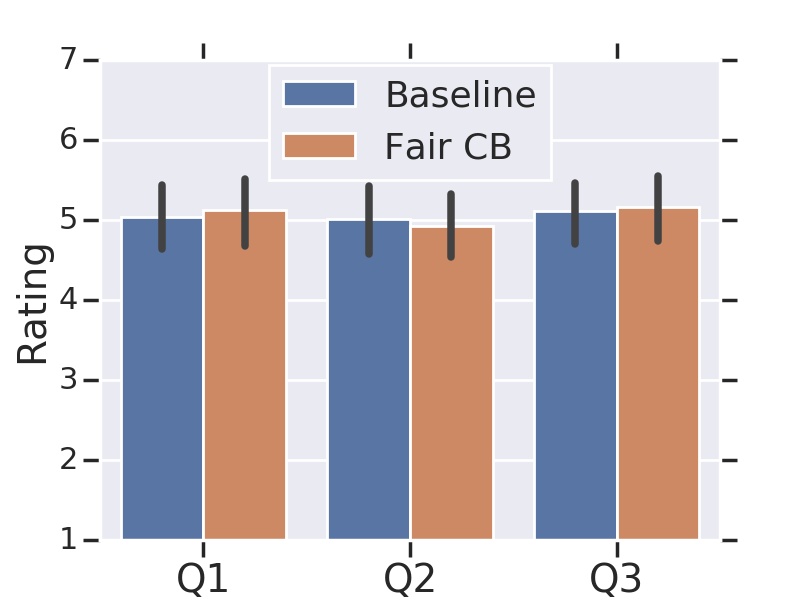}
\caption{Responses to the subjective questions in Table~\ref{tab:survey} by each player for the Baseline and Fair CB algorithms}
\label{fig:subjective}
\end{figure}

Out of the 37 pairs of participants that completed the quiz, 27 pairs (54 participants) answered all the subjective responses. We compare the responses of participants for the subjective questions given in Table~\ref{tab:survey} across the Baseline and Fair CB algorithms (Fig.~\ref{fig:subjective}).  

%A one-way ANOVA performed on the combined player responses across the two algorithms indicate no significant effect of the algorithm (Baseline or Fair CB) on the perceived fairness of the system (Q1: $p = 0.762$, Q2: $p = 0.76$). 
To test our hypothesis that the perceived fairness of the Fair CB algorithm is not worse than the Baseline,\footnote{We define ``not worse than'' using the concept of ``non-inferiority''~\cite{lesaffre2008superiority}.} a one-tailed paired t-test for a non-inferiority margin $\Delta = 0.5$ and a level of statistical significance $\alpha = 0.025$  showed that participants perceived the fairness of the Fair CB algorithm not worse than the Baseline for all questions ($p<0.0001$).

%In Fig.~\ref{fig:fairness}, we show the responses of all participants, grouped by their country of origin ($P1$ for USA, $P2$ for India). The figures show that participants rated their fairness and trust quite high in both algorithms. We attribute the slightly higher (albeit within the standard error) ratings for players $P2$ to potential differences in their cultural background. 

%The choice of algorithm also had no effect on the perceived trust in the system as seen in a one-way ANOVA ($p = 0.857$).
%However, the $P2$ players provided a slightly higher rating for trust in the system, than $P1$ players. The quiz was designed such that all participants from US were assigned as player 1 and all participants from India were assigned player 2. And the observed difference could be an artifact of the player background.

%\subsubsection{Subjective Responses}

We also asked participants to describe any difference they noticed in the way the questions were distributed between the two quizzes corresponding to the two algorithms. Users that did not have a clear disparity in their performance in the two contexts did not see a difference in the behaviour of the two algorithms. Users with greater disparity noticed a difference between the two algorithms, with some users even recognizing how each algorithm worked. Table~\ref{tab:comments} shows example responses for the users.

\section{Unknown Context Distributions}
\label{sec:unknown}
%!TEX root=main.tex
The Fair CB algorithm described in section~\ref{sec:algorithm} assumes that the context distribution $q$ is known to the learner. We provide an extension of our algorithm to the case where the context distribution $q$ is unknown. We include regret guarantee of the algorithm, while we leave empirical results for future work.

A natural idea is to maintain an empirical context distribution based on the observations and to use it as a proxy for $q$.
Specifically, to avoid changing the feasible set too often, we divide the entire horizon into $\order(\log_2 T)$ epochs, where epoch $k$ contains rounds $\tau_k, \ldots, \tau_{k+1}-1$ with $\tau_k = 2^{k-1}$.
Within epoch $k>1$, we let $q_k$ be the empirical context distribution using observations from the last $k-1$ epochs:
\begin{equation}\label{eqn:q_k}
q_k(j) = \frac{1}{\tau_k - 1}\sum_{t=1}^{\tau_k - 1}\one\{j_t = j\}, \;\forall j\in[M].
\end{equation}
Note that by standard concentration argument (specifically Bernstein inequalities and union bound), we have with probability at least $1-1/T$,
\begin{equation}\label{eqn:concentration}
\|q - q_k\|_1 \leq \epsilon_k \triangleq 4\sqrt{\frac{M\ln(TM)}{\tau_k - 1}} + \frac{2M\ln (TM)}{\tau_k - 1}.
\end{equation}
Accordingly, for epoch $k>1$ we define the feasible set $\Omega_k$ as
\begin{equation}\label{eqn:Omega_k}
\Omega_k = \begin{Bmatrix}
\left.\begin{matrix}
P = (p^1,...,p^M) 
\end{matrix}\left|\begin{matrix}
p^1,...,p^M \in \Delta_K \ \textrm{and}\\ 
\sum_{j=1}^M q_k(j)p^j(i) \geq v - \epsilon_k, \forall i \in [K]\end{matrix}\right.\right.
\end{Bmatrix},
\end{equation}
where we introduce a small slack $\epsilon_k$ to the fairness constraint $v$.
The reason of relaxing the constraint is to make sure that $\Omega_k$ always contains $\Omega$ with high probability.
Indeed, conditioning on the event Eq.~\eqref{eqn:concentration}, for any $P \in \Omega$ we have
\[
\sum_{j=1}^M q_k(j)p^j(i)
\geq \sum_{j=1}^M q(j)p^j(i) - \|q - q_k\|_1 \geq v - \epsilon_k
\]
and thus $P \in \Omega_k$.
On the other hand, relaxing the constraint means that the algorithm no longer always strictly satisfies the fairness requirement.
Instead, we measure the fairness of the algorithm by the average amount of violation of the fairness constraint, defined as
\[
\text{Vio} = \E\left[\frac{1}{T}\sum_{t=1}^T\max\left\{0, v - \min_{i\in[K]}\sum_{j=1}^M q(j)p^j_t(i)\right\}\right]
\]
where $p^j_t$ is again the distribution of arm $i_t$ given the history and $j_t = j$.

Our final algorithm simply runs a new instance of Algorithm~\ref{algorithm: known context} with feasible set $\Omega_k$ on epoch $k$.
See Algorithm~\ref{algorithm: unknown context} for the pseudocode.
In the following theorem, we show that the algorithm ensures the same regret bound while keeping the per-round fairness violation to be arbitrarily small as long as $T$ is large enough.

\begin{algorithm}[t]
 \caption{Fairness CB with Unknown Context Distribution}
\label{algorithm: unknown context}
\begin{algorithmic}[1]
\State \textbf{Input:} fairness constraint parameter $v$
\State \textbf{Define:} $\tau_k = 2^{k-1}$, $\Psi_k(P) =  \frac{1}{\eta_k}\sum_{j=1}^M\sum_{i=1}^K \psi(p^j(i))$ where $\psi(p) = p\ln p$ and $\eta_k = \sqrt{M\ln K/(\tau_k K)}$
\State For $t=1$, sample an arm uniformly at random
\For{$k=2,3,\ldots$}%\Comment{$k$ indexes a block with length $\tau_k$}
    \State Update $q_k$ and $\Omega_k$ according to Eq.~\eqref{eqn:q_k} and Eq.~\eqref{eqn:Omega_k}
    \For{$t=\tau_k,\ldots,\tau_{k+1}-1 $}
          \State Compute $P_{t} = \argmin_{P \in \Omega_k} \sum_{s=\tau_k}^{t-1} \inner{p^{j_s},\hat{l}_s} + \Psi_k(P)$
        \State Observe $j_t$ and play $i_t \sim p_t^{j_t}$
        \State Construct loss estimator $\hat{l}_t(i) = \frac{l_t(i)}{p_t^{j_t}(i)}\one\{i_t=i\}, \;\forall i$             
    \EndFor
\EndFor
\end{algorithmic}
\end{algorithm}

\begin{theorem}
Algorithm~\ref{algorithm: unknown context} ensures 
\[
\text{Reg} = \order\left(\sqrt{TMK\ln K}\right) \text{ and }\; \text{Vio} = \order\left(\sqrt{\frac{M\ln(TM)}{T}} + \frac{M\ln(TM)\ln T}{T}\right).
\]
\end{theorem}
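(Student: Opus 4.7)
The plan is to decompose the analysis over the $O(\log T)$ epochs and treat each as an independent instance of Algorithm~\ref{algorithm: known context} with decision set $\Omega_k$. Let $\mathcal{E}$ denote the ``good event'' that Eq.~\eqref{eqn:concentration} holds for every epoch $k>1$ simultaneously; by a union bound over the $O(\log T)$ epochs this event still holds with probability at least $1 - O(\log T / T)$. On $\mathcal{E}$, we established $\Omega \subseteq \Omega_k$ for all $k$, so any comparator $P_*\in\Omega$ is also feasible within each epoch's FTRL.

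For the regret bound, I would first argue that conditioning on $\mathcal{E}$, the analysis of Theorem~\ref{thm:regret} applies verbatim inside epoch $k$ (with horizon $\tau_k$, learning rate $\eta_k = \sqrt{M \ln K/(\tau_k K)}$, and decision set $\Omega_k \supseteq \Omega$), giving per-epoch regret $O(\sqrt{\tau_k M K \ln K})$ against any fixed $P_* \in \Omega$. Summing the geometric series $\sum_{k=1}^{O(\log T)} \sqrt{\tau_k M K \ln K} = O(\sqrt{T M K \ln K})$ gives the target bound on the good event. The complement $\mathcal{E}^c$ contributes at most $T \cdot \Pr[\mathcal{E}^c] = O(\log T)$ to the expected regret, which is absorbed into the leading term.

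For the fairness violation, the key observation is that whenever $P_t \in \Omega_k$ and the concentration bound holds, for every arm $i$,
\begin{equation*}
\sum_{j=1}^M q(j) p_t^j(i) \;\geq\; \sum_{j=1}^M q_k(j) p_t^j(i) - \|q-q_k\|_1 \;\geq\; (v-\epsilon_k) - \epsilon_k \;=\; v - 2\epsilon_k,
\end{equation*}
so the per-round violation in epoch $k$ is at most $2\epsilon_k$. Summing gives
\begin{equation*}
\tfrac{1}{T}\sum_k \tau_k \cdot 2\epsilon_k \;=\; O\!\left(\tfrac{1}{T}\sum_k \bigl(\sqrt{\tau_k M \ln(TM)} + M\ln(TM)\bigr)\right),
\end{equation*}
which evaluates to $O\bigl(\sqrt{M\ln(TM)/T} + M\ln(TM)\ln T / T\bigr)$ by the same geometric summation, plus an $O(1/T)$ contribution from the round $t=1$ and the bad event (where per-round violation is trivially at most $1$).

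The main obstacle is subtle rather than computational: ensuring that the FTRL analysis of Theorem~\ref{thm:regret} truly transfers to each epoch without loss, since the comparator $P_*$ is fixed over the \emph{entire} horizon but the decision set $\Omega_k$ changes between epochs. This is handled precisely by the inclusion $\Omega \subseteq \Omega_k$ on $\mathcal{E}$, so the per-epoch regret against the globally best $P_*\in\Omega$ equals the per-epoch FTRL regret against $P_*$ viewed as an element of $\Omega_k$. Care is also needed in epoch $k=1$ (a single round with uniform sampling) and in cleanly separating the $\mathcal{E}$ vs.\ $\mathcal{E}^c$ contributions to both $\text{Reg}$ and $\text{Vio}$, but these are order-$O(1/T)$ effects.
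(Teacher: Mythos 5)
Your proposal is correct and follows essentially the same route as the paper: condition on the concentration event, use $\Omega \subseteq \Omega_k$ so the fixed comparator $P_*$ is feasible in every epoch, apply Theorem~\ref{thm:regret} per epoch and sum the geometric series, and bound the per-round violation in epoch $k$ by $\order(\epsilon_k)$ via the construction of $\Omega_k$. Your extra care (explicit union bound over epochs, the factor $2\epsilon_k$, and the bad-event contribution) only fills in details the paper leaves implicit and does not change the argument.
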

\begin{proof}
Clearly we only need to condition on the event Eq.~\eqref{eqn:concentration} since it happens with probability at least $1-1/T$.
With the fact $P_* \in \Omega_k$ for all $k$, the regret guarantee is a simple application of Theorem~\ref{thm:regret}.
Indeed, let $K = \order(\log_2 T)$ be the total number of epochs, we have
\begin{align*}
\text{Reg} &= \sum_{k=1}^K \sum_{t=\tau_k}^{\min\{\tau_{k+1}-1,T\}} \E\left[\inner{p_t^{j_t}-p_*^{j_t},l_t} \right] \\
&= \sum_{k=1}^K \order\left(\sqrt{\tau_k MK\ln K}\right) = 
\order\left(\sqrt{TMK\ln K}\right).
\end{align*}
The amount of violation is also clear due to the construction of $\Omega_k$:
\begin{align*}
\text{Vio} \leq \frac{1}{T}\sum_{k=1}^K \tau_k \epsilon_k = \order\left(\sqrt{\frac{M\ln(TM)}{T}} + \frac{M\ln(TM)\ln T}{T}\right).
\end{align*}
This finishes the proof.
\end{proof}

\section{Discussion}\label{sec:discussion}
%!TEX root = main.tex

%-------------------------------------------------------------------------------
%\noindent\textbf{Design Recommendations.} 
%This work shows how different loss distributions, contexts and fairness affect  performance.
 We view our findings as valuable considerations regarding AI systems that make fair allocation decisions to multiple users.
Theoretically, we show how the classic FTRL framework can be naturally generalized to ensure fairness and we rigorously analyze the performance of our proposed algorithms in terms of both regret guarantee and fairness violation (in the case of unknown context distribution).

Empirically, our first finding is that increasing fairness results in worse performance, when there is one user who is outperformed in all contexts. On the other hand, if there exists a context where a user outperforms all others, whether fairness will affect performance depends on the distribution of contexts. If that context appears frequently enough for the desired fairness constraint to be satisfied, performance will not be affected. 

We also found that having a fair algorithm with no statistical assumptions about the process generating the losses is particularly beneficial in adversarial domains. Interestingly, increasing fairness in our adversarial setting was beneficial to the Fair UCB algorithm, since fairness reduced the reliance on the optimistic bounds that was exploited by the adversary. 

%We find this an interesting result that warrants further investigation. 

%the lack of Fair-CB algorithm leverages n adversarial domains that exploit the stochastic environment assumption. On the other hand, increasing fairness in that case may be beneficial, since it reduces reliance on the optimistic bounds that can be exploited by the adversary.  
%In that case, it is important to account for the trade-offs between having a fair system that guarantees a minimum rate of task allocation, with a system that performs optimally, ignoring the not top-performing users.

%This suggests that accounting for contexts \emph{mitigates the negative effect of fairness on performance}, since the proposed algorithm imposes the fairness constraint in the contexts that each user is best at. 

%may result in assigning more resources to overall worse performing team-members.

Finally, the benefit of the context-based algorithm depends on the disparity between users, that is how much they differ in their performance on each context. In our user study, Fair CB performed best for pairs of participants where each participant was better on one context and worse on another.

\noindent\textbf{Future Directions.} 
%
%In our theoretical contribution, we have assumed the same minimum allocation rate for all arms. It may be desirable to have different minimum rates for each user, and one can easily extend the proposed algorithms and theoretical results by redefining the feasible set $\Omega$ accordingly. 
%
We are excited to further investigate how our findings can generalize beyond online game settings, in domains where multiple users interact with a physically embodied robot~\cite{jung2018robot}: for instance, a robot receptionist greeting customers, an assistive robot in a stroke care facility helping patients eating a meal, or a factory robot delivering parts to workers. 

% Previous Wizard of Oz experiments on a physical human-robot collaboration setting have shown the effect of a robot's distribution of resources on team satisfaction~\cite{}, and we are excited to explore further  on deployed autonomous systems.

\noindent\textbf{Conclusion.} Overall, we are excited to have brought about a better understanding of the interplay between contexts, fairness and performance in task allocation settings. Designing AI systems that ensure and demonstrate fairness when interacting with people is critical to their acceptance, and deriving theoretical and experimental foundations for these systems is yet an under-served aspect in Human-AI Interaction.

% \addtolength{\textheight}{-12cm}   % This command serves to balance the column lengths
%                                   % on the last page of the document manually. It shortens
%                                   % the textheight of the last page by a suitable amount.
%                                   % This command does not take effect until the next page
%                                   % so it should come on the page before the last. Make
%                                   % sure that you do not shorten the textheight too much.

% %%%%%%%%%%%%%%%%%%%%%%%%%%%%%%%%%%%%%%%%%%%%%%%%%%%%%%%%%%%%%%%%%%%%%%%%%%%%%%%%

% %%%%%%%%%%%%%%%%%%%%%%%%%%%%%%%%%%%%%%%%%%%%%%%%%%%%%%%%%%%%%%%%%%%%%%%%%%%%%%%%

%%%%%%%%%%%%%%%%%%%%%%%%%%%%%%%%%%%%%%%%%%%%%%%%%%%%%%%%%%%%%%%%%%%%%%%%%%%%%%%%

\bibliographystyle{ACM-Reference-Format}  % do not change this line!

\bibliography{references}

% %%%%%%%%%%%%%%%%%%%%%%%%%%%%%%%%%%%%%%%%%%%%%%%%%%%%%%%%%%%%%%%%%%%%%%%%%%%%%%%%
%\section*{APPENDIX}
%\input{appendix.tex}

% Appendixes should appear before the acknowledgment.
\end{document}